\newcommand{\esp}[1]{\mathbb{E}\left[ #1 \right]}
\newcommand{\R}{\mathbb{R}}
\newcommand{\norm}[1]{\lVert #1\rVert}
\newcommand{\cD}{\mathcal{D}}
\newcommand{\cN}{\mathcal{N}}
\newcommand{\RR}{\mathbb{R}}
\newcommand{\cA}{\mathcal{A}}
\newcommand{\Id}{I_d}
\newcommand{\reg}{\mu_{\rm reg}}
\newcommand{\y}{y}
\newcommand{\rx}{\cR(x)}
\newcommand{\ry}{\cR(y)}
\newcommand{\simi}{\eta}
\newcommand{\convtodp}{\chi}
\newcommand{\Rabs}{R_{\rm abs}}
\newcommand{\Rrel}{R_{\rm rel}}
\newcommand{\cR}{\mathcal{R}}
\newcommand{\GM}{{\rm GM}_\sigma}
\newcommand{\RGM}{{\rm RGM}_{\gamma, \sigma}}
\newcommand{\Xtilde}{\tilde{X}}
\newcommand{\dproba}{\nu}
\newcommand{\sqnorm}[1]{\left\|#1 \right\|^2}
\renewcommand{\norm}[1]{\left\|#1 \right\|}
\newtheorem{definition}{Definition}
\newtheorem{proposition}{Proposition}
\newtheorem{theorem}{Theorem}
\newtheorem{lemma}{Lemma}
\newtheorem{corollary}{Corollary} 
\newtheorem{corollary*}{Corollary} 
\newcommand{\Lrel}{L_{\rm rel}}
\newcommand{\eg}{{\emph{e.g.,~}}}
\newcommand{\ie}{{\emph{i.e.,~}}}
\begin{document}

\runningtitle{The Relative Gaussian Mechanism and its Application to Private Gradient Descent}

\twocolumn[

\aistatstitle{The Relative Gaussian Mechanism\\ and its Application to Private Gradient Descent}

\aistatsauthor{ Hadrien Hendrikx \And Paul Mangold \And  Aurélien Bellet }

\aistatsaddress{ Centre Inria de l'Univ. Grenoble Alpes\\
  CNRS, LJK,
  Grenoble, France \And  CMAP, UMR 7641, \\ École Polytechnique \And
  Inria, Université de Montpellier,\\France} ]

\begin{abstract}
  The Gaussian Mechanism (GM), which consists in adding Gaussian noise to a vector-valued query before releasing it, is a standard privacy protection mechanism. In particular, given that the query respects some \emph{L2 sensitivity} property (the L2 distance between outputs on any two neighboring inputs is bounded), GM guarantees Rényi Differential Privacy (RDP). Unfortunately, precisely bounding the L2 sensitivity can be hard, thus leading to loose privacy bounds. In this work, we consider a \emph{Relative} L2 sensitivity assumption, in which the bound on the distance between two query outputs may also depend on their norm. Leveraging this assumption, we introduce the \emph{Relative Gaussian Mechanism} (RGM), in which the variance of the noise depends on the norm of the output. We prove tight bounds on the RDP parameters under relative L2 sensitivity, and characterize the privacy loss incurred by using output-dependent noise. In particular, we show that RGM naturally adapts to a latent variable that would control the norm of the output. Finally, we instantiate our framework to show tight guarantees for Private Gradient Descent, a problem that naturally fits our relative L2 sensitivity assumption. 
\end{abstract}

\section{INTRODUCTION}
\label{sec:intro}

Differential Privacy (DP)~\citep{dwork2006differential} is considered the gold standard for protecting privacy, for instance in machine learning. In this framework, a curator has a database $x$, and would like to answer a query $\cR$ on $x$ by releasing an output $\cR(x)$. Yet, releasing $\cR(x)$ might reveal sensitive information on $x$. Instead, the curator may use a private algorithm $\cA$ to release a sanitized approximation $\cA(\cR)(x)$ of $\cR(x)$. To guarantee that the amount of information leaked by releasing $\cA(\cR)(x)$ is limited, DP ensures that the distributions of $\cA(\cR)(x)$ and $\cA(\cR)(y)$ are close for any $y \sim x$, \emph{i.e.}, that is close to $x$ according to a neighboring relation (databases that only differ in one row for instance).
Several divergences have been considered to measure the closeness between these two distributions, leading to different variants of DP. Among them, \emph{Rényi-Differential Privacy}~(RDP), which is based on the Rényi divergence, has become popular for its mathematical properties~\citep{mironov2017renyi}.

\begin{definition}[Rényi Differential Privacy]
  \label{def:RDP}
  A randomized algorithm $\cA$ satisfies $(\alpha, \varepsilon)$-RDP for $\alpha>1$ and $\varepsilon>0$ if $\cD_\alpha \left(\cA(x) || \cA(y) \right) \leq \varepsilon$ for all pairs of neighboring datasets $x \sim y$, 
  where $\cD_\alpha \left(\cA(x) || \cA(y) \right)$ is the $\alpha$-Rényi divergence between $\cA(x)$ and $\cA(y)$.  
\end{definition}
A fundamental building block for designing a private algorithm $\cA$ is the \emph{Gaussian Mechanism} ($\GM$)~\citep{dwork2006calibrating,dwork2014algorithmic}, which adds Gaussian noise to the private value $\cR(x)$:
\begin{equation}
    \GM(\cR)(x) = \cR(x) + \cN(0, \sigma^2), \text{ for some } \sigma^2 > 0 \,.
\end{equation}
It is very common (\eg in machine learning) to compose multiple calls to $\GM$ to build iterative algorithms like differentially private gradient descent \citep{song2013Stochastic,bassily2014Private}. 
RDP is able to tightly track the privacy guarantees of (compositions of) $\GM$, and can be converted into the more classical $(\epsilon,\delta)$-DP variant~\citep{mironov2017renyi}. 

The noise scale $\sigma^2$ of $\GM$ is based on an L2 sensitivity assumption, which guarantees that for any neighboring inputs $x \sim y$, the query $\cR$ verifies:
\begin{equation} \label{eq:l2_sensitivity}
    \sqnorm{\cR(x) - \cR(y)} \leq \Rabs^2
\end{equation}
for some $\Rabs > 0$. In particular, for $\sigma^2=\frac{\alpha \Rabs^2}{2\varepsilon}$, $\GM(\cR)(x)$ satisfies $(\alpha,\varepsilon)$-RDP. 
It is thus crucial to estimate the L2 sensitivity precisely to achieve the best possible privacy-utility trade-off. Unfortunately, this $\Rabs$ constant is often not directly known and difficult to bound tightly. In some cases, the distance between outputs is also highly correlated to the norm of these outputs, and this is the case in particular when the outputs depend on a non-private latent variable. 

Consider for instance an institute that would like to assess the mean salary for different jobs in a given company. Individual salaries are sensitive information, but people's job is not secret, and the average salary per job is the desired output. If we were to use the standard Gaussian Mechanism, then we would need an absolute sensitivity bound of the form of~\eqref{eq:l2_sensitivity} (note that other types of noises, such as Laplace, would require similar bounds in other norms, such as $L1$, but the absolute aspect would remain). To do this, the simplest approach is to use a bound on the maximum possible salary across all jobs in the company. However, this is not satisfactory since results for lower-paid jobs would be dominated by noise. An alternative is to restrict the neighboring relation to people that have the same job, which is possible since the job is not private. The problem is that estimating the salary per job (or a bound on it) is exactly what we would like to achieve in the first place. In this case, absolute sensitivity bounds are thus unsatisfactory, and would lead to unnecessarily high, as well as unfair (since the precision would be higher for well-paid jobs) estimates of the mean salary per job. Now consider that we know that by law, there should not be more than $10\%$ variations in salary for a given job in a given company: this corresponds to a \emph{relative} sensitivity assumption. In this case, one is tempted to calibrate the noise to the empirical mean salary for a given job, since we know that all the people with the same job in this company have comparable salaries. In this paper, \textbf{we tightly characterize how to scale the noise under this relative sensitivity assumption}, leading to precise and fair estimates of the mean salaries per job. 
Note that this simple example directly translates for instance to releasing gradients, where the job would be the point at which they are computed and the salary would be their magnitude. 

Our contributions are the following: \textbf{(i)} We introduce the Relative L2 Sensitivity, which generalizes the L2 sensitivity by allowing the upper bound to depend on the norm of queries. \textbf{(ii)} We leverage this assumption to introduce the \emph{Relative Gaussian mechanism} (RGM), in which the noise that we introduce depends on the output that we are about to release. \textbf{(iii)} We show tight privacy guarantees for the Relative Gaussian Mechanism. \textbf{(iv)} We show how the Relative Gaussian mechanism can be applied for Private Gradient Descent to provide adaptivity to the gradients' magnitude.

We first review related work in Section~\ref{sec:related_work}. We then define the Relative L2 Sensitivity in Section~\ref{sec:relative_sens}, and introduce RGM in Section~\ref{sec:rgm}. Finally, we instantiate the results for gradient descent on quadratics in Section~\ref{sec:gradient-descent}, and present numerical illustrations in  Section~\ref{sec:experiments}.

\section{RELATED WORK}
\label{sec:related_work}

\textbf{Local and smooth sensitivity.} Several classic techniques in the DP literature seek to avoid the calibration of noise to global sensitivity by relying on the notion of \emph{local sensitivity}. The local sensitivity $LS_\cR(x) = \max_{y : y\sim x}\|\cR(x)-\cR(y)\|$ of a dataset $x$ measures how much $\cR(y)$ can differ from $\cR(x)$ for any neighbor $y$ of $x$, which can be much smaller than the global sensitivity. In general however, calibrating the noise to the local sensitivity does not provide privacy, as two neighboring datasets may have very different local sensitivities. To go around this issue, previous work has proposed approaches based on smoothing the local sensitivity~\citep{smooth_sens}. Of particular relevance to our work,~\citet{bun2018composable} introduce truncated Concentrated Differential Privacy (tCDP), a privacy notion that is well-suited to analyzing mechanisms with smooth sensitivity. They prove 
that ``Gaussian Smooth Sensitivity'' (Gaussian mechanism used with smooth sensitivity) satisfies tCDP. This result proves to be useful in solving problems such as Gap-max, and can be extended to other noise distributions~\citep{bun2019average}. If $\cR(x)$ is of dimension $1$, the relative sensitivity assumption and the subsequent Gaussian mechanism can be seen as Gaussian Smooth Sensitivity with a special case of smooth upper bound. Yet, we go further in this paper and explore the $d$-dimensional case, which in particular allows us to consider more complex algorithms such as gradient descent. 

Other approaches include refining an absolute sensitivity bound by privately discarding outliers~\citep{tsfadia2022friendlycore}, going around the lack of global sensitivity bound by contructing a private data-dependent upper bound \citep{DBLP:conf/uai/Wang18}, or proposing and privately testing the validity of a local sensitivity bound before releasing the output~\citep{ptr}, potentially using distributional assumptions to privately estimate queries whose absolute sensitivity is unbounded \citep{brunel2020propose}. The main drawback of these approaches is that they require a special structure of the problem and release mechanism (which essentially also comes down to a certain smoothness of the local sensitivity). For gradient descent, this would for instance require an absolute bound on individual gradients and is thus not well-suited. Instead, the relative Gaussian Mechanism uses a relative sensitivity assumption which does not require the absolute boundedness of individual gradients.

\textbf{Private gradient descent.} Differentially private gradient descent (DP-GD) and its stochastic variant (DP-SGD) were first proposed by \citet{song2013Stochastic}. These algorithms and further variations have been widely studied as private minimizers of the empirical risk \citep{song2013Stochastic, bassily2014Private,wang2017Differentially}, and of the population risk \citep{bassily2019Private,feldman2020Private}. All these algorithms have been formally shown to achieve the optimal utility derived by \citet{bassily2014Private}. The analysis crucially relies on an absolute L2 sensitivity bound on the gradients (typically obtained by assuming the loss function to be Lipschitz) to calibrate the noise. Unfortunately, this often leads to the injection of excessive amounts of noise. \citet{abadi2016Deep} proposed a more practical version of DP-SGD (implemented notably in PyTorch Opacus \citep{yousefpour2021opacus} and TensorFlow Privacy \citep{abadi2016tensorflow}) which uses gradient clipping to reduce gradients' L2 sensitivity. Similarly, \citet{asi2022Private} reduced this sensitivity using a clipping-like procedure. In both cases, this decrease in L2 sensitivity introduces bias in the computation \citep{amin2019Bounding}. This phenomenon makes the analysis of clipped algorithms significantly harder \citep{chen2020Understanding,yang2022Normalized,koloskova2023Revisiting}, and it is difficult to choose a constant clipping threshold without tuning an additional hyperparameter. \citet{pichapati2019AdaCliP} and \citet{andrew2021Differentially} proposed heuristic methods for chosing clipping thresholds adaptively, although without theoretical guarantees and with limited practical applicability. Our method can reduce the amount of injected noise, while circumventing the difficulty of setting a proper clipping threshold throughout the iterations. Indeed, our relative sensitivity assumption allows the design of a relative Gaussian mechanism where noise naturally adapts to the gradients' norms. 

\section{RELATIVE L2 SENSITIVITY}
\label{sec:relative_sens}

As discussed in the introduction, we start by relaxing the restrictive L2 sensitivity assumption.
\begin{definition}[Relative L2 sensitivity] \label{def:generalized_L2} An algorithm $\cA$ satisfies Relative L2 sensitivity if there exists constants $\simi > 0$ and $\Rrel > 0$ such that for any two neighboring inputs $x \sim y$:
\begin{equation} \label{eq:rel_sensitivity}
     \sqnorm{\cR(x) - \cR(y)} \leq \simi^2 \sqnorm{\cR(x)} + \Rrel^2.
\end{equation}
\end{definition}

Note that by symmetry, this is equivalent to $\sqnorm{\cR(x) - \cR(y)} \leq \simi^2 \min(\sqnorm{\cR(x)}, \sqnorm{\cR(y)}) + \Rrel^2$. Besides, we recover the L2 sensitivity for $\simi = 0$.

\paragraph{Examples.} This definition is particularly useful when we know \emph{relative} or \emph{multiplicative} bounds on inputs. As discussed earlier, this would be the case when estimating salaries for a given job, if we know that all the people we consider have salaries within $10\%$ of each other (for instance because it is imposed by the law). We would need to know salary estimates for each job to guess the appropriate absolute sensitivity $\Rabs^2$ (or use a very imprecise global one for all jobs), whereas knowing the law directly gives us $\simi = 0.1$ and $\Rrel = 0$.

In this case, the salaries are directly correlated to a latent variable: the jobs. This is also the case for gradients, whose norm depend on the point at which they are computed. The absolute L2 sensitivity would write $\sqnorm{\nabla f(\theta) - \nabla f^\prime(\theta)} \leq \Rabs(\theta)^2$, where $f$ and $f^\prime$ are objective functions computed on neighboring datasets. Therefore, we would either need to (i) know $\Rabs(\theta)$ for all values of $\theta$, which is a lot of information, or (ii) bound it uniformly, which can be very loose. In contrast, Relative L2 sensitivity can ensure $\sqnorm{\nabla f(\theta) - \nabla f^\prime (\theta)} \leq \simi^2 \sqnorm{\nabla f(\theta)} + \Rrel^2$ with tight absolute (independent of $\theta$) parameters $\simi$ and $\Rrel$, see Section~\ref{sec:gradient-descent} for more details.

\paragraph{Links to local sensitivity.} As discussed in the related work section, the motivating idea behind local sensitivity~\citep{smooth_sens} is to set the noise according to the bound on the distance between the specific output we would like to protect and all neighboring ones. This allows much lower noise in general, since some outputs might have small sensitivity. Yet, this does not guarantee differential privacy as the level of noise injected gives information about the input that is released, as two neighboring inputs might have very different local sensitivities.

Note that Definition~\ref{def:generalized_L2} can be considered as a local sensitivity bound, since it depends on the inputs that we consider. It is stronger however: due to its symmetry, relative L2 sensitivity also ensures that two neighboring inputs also have comparable norms, and so comparable local sensitivities. This guarantees a form of smooth sensitivity~\citep{smooth_sens,bun2018composable}, and we can thus expect comparable guarantees. We will see that Definition~\ref{def:generalized_L2} allows privacy guarantees to hold even though the norm of the input is partly revealed through the noising process. In particular, we will show that Definition~\ref{def:generalized_L2} can be leveraged to release information privately even when $\Rrel \neq 0$. Our framework thus highlights another interesting example in which a form of local sensitivity can be used while still ensuring Differential Privacy, which we will leverage to analyze privatize gradient descent.

\section{THE RELATIVE GAUSSIAN MECHANISM}
\label{sec:rgm}

\subsection{Mechanism and privacy guarantees}
We now present the Relative Gaussian Mechanism ($\RGM$), and derive its privacy guarantees. $\RGM$ extends $\GM$, and leverages relative sensitivity to guarantee privacy while adapting the scale of the noise to the norm of the query. 
\begin{definition}[Relative Gaussian Mechanism]
    Let $\gamma > 0$ and $\sigma > 0$. The Relative Gaussian Mechanism of parameters $(\gamma, \sigma)$ is defined as:
    \begin{equation}
        \RGM(\cR)(x) = \cR(x) + \mathcal{N}(0, \gamma \sqnorm{\cR(x)} + \sigma^2)\,.
    \end{equation} 
\end{definition}
$\RGM$ generalizes the standard $\GM$, that we recover with $\gamma = 0$. When $\gamma > 0$, it controls to which extent $\norm{\cR(x)}$ impacts the noise. Note that $\sigma^2$ is a baseline noise, which allows to handle inputs where the query's output has small norm. For instance, if $\cR(x)=0$ on some input $x$, and $\cR(y) \neq 0$ on an input $y \sim x$, this baseline noise is necessary to guarantee privacy.

We show that, although $\RGM$ uses the query output to calibrate the noise, it can still guarantee privacy. This perhaps surprising result follows from the relative sensitivity assumption. Intuitively, this assumption ensures that all neighboring outputs have comparable norms, resulting in comparable levels of noise. 
The next theorem formalizes this intuition, deriving tight privacy guarantees for $\RGM$ on queries that satisfy a relative L2 sensitivity assumption.

\begin{theorem}[Privacy guarantees of $\RGM$] \label{thm:renyi-sgm}
Let $\cR: \cD \rightarrow \RR^d$ be a query that verifies $(\simi, \Rrel)$-relative L2 sensitivity (Definition~\ref{def:generalized_L2}) for some $\simi > 0$ and $\Rrel \ge 0$. Then for $1 \leq \alpha < (1 + \simi)^2 / (2\simi + \simi^2)$, and $\sigma^2 \geq \gamma \simi^{-2} \left[1 - \simi(\alpha - 1)\right]\Rrel^2$, $\RGM(\cR)$ satisfies $(\alpha, \epsilon)$-Rényi-DP with
\begin{equation}
    \epsilon
    =
    \frac{\alpha \simi^2}{2\gamma}\times \frac{1 + \gamma d (2 + \simi)^2(1 + \simi)^2}{1 - \simi(\alpha - 1)(2 + \simi)}.
\end{equation}
\end{theorem}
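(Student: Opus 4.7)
The plan is to compute $\cD_\alpha(\RGM(\cR)(x) \,\|\, \RGM(\cR)(y))$ directly using the closed form for the Rényi divergence between two isotropic multivariate Gaussians of different variances, and then to control each resulting term with the relative $L_2$ sensitivity. I first observe that $\RGM(\cR)(x) \sim \cN(\cR(x), s_x^2 \Id)$ where $s_x^2 := \gamma \sqnorm{\cR(x)} + \sigma^2$, and similarly for $y$. The standard formula for the Rényi divergence between two isotropic Gaussians yields
\begin{equation*}
\cD_\alpha = \frac{\alpha \sqnorm{\cR(x) - \cR(y)}}{2 s_\alpha^2} + \frac{d}{2(\alpha-1)}\bigl[\alpha \ln s_y^2 - (\alpha-1)\ln s_x^2 - \ln s_\alpha^2\bigr],
\end{equation*}
where $s_\alpha^2 := \alpha s_y^2 - (\alpha-1) s_x^2$, which must be positive for the divergence to be finite. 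The proof thus reduces to (i) showing that the hypothesis on $\alpha$ guarantees $s_\alpha^2 > 0$, and (ii) bounding both terms uniformly in $\cR(x), \cR(y)$.

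For the mean term and the well-posedness of $s_\alpha^2$, I would use the symmetric form of Definition~\ref{def:generalized_L2} to deduce $|r_y - r_x| \leq \simi \min(r_x, r_y) + \Rrel$ and $r_x + r_y \leq (2+\simi)\min(r_x, r_y) + \Rrel$, where $r_z := \norm{\cR(z)}$. The worst case for $s_\alpha^2$ is $r_y < r_x$; combining $r_y \geq (r_x - \Rrel)/(1+\simi)$ with the assumption $\sigma^2 \geq \gamma \simi^{-2}(1-\simi(\alpha-1))\Rrel^2$ yields $s_\alpha^2 \geq s_x^2 \, [1 - \simi(\alpha-1)(2+\simi)]$, which is strictly positive precisely when $\alpha < (1+\simi)^2/(\simi(2+\simi))$, matching the hypothesis. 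The mean-shift term is then bounded by $\alpha(\simi^2 r_x^2 + \Rrel^2) / \{2 s_x^2 [1 - \simi(\alpha-1)(2+\simi)]\} \leq \alpha \simi^2 / \{2\gamma[1-\simi(\alpha-1)(2+\simi)]\}$, using $s_x^2 \geq \gamma r_x^2$ and reapplying the $\sigma^2$ condition to absorb $\Rrel^2$.

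For the log-variance term, setting $t := s_y^2/s_x^2$, the derivative of $\alpha \ln t - \ln(\alpha t - (\alpha-1))$ equals $\alpha(\alpha-1)(t-1)/[t(\alpha t - (\alpha-1))]$. Integrating from $1$ to $t$ and upper-bounding the denominator by its worst-case value over the integration interval produces a second-order-in-$(t-1)$ bound carrying a $[1-\simi(\alpha-1)(2+\simi)]^{-1}$ prefactor. Combining this with $|t-1| = \gamma|r_y^2 - r_x^2|/s_x^2 \leq \gamma(\simi r_x + \Rrel)((2+\simi) r_x + \Rrel)/s_x^2$, applying $\gamma r_x^2 \leq s_x^2$, and using the $\sigma^2$ bound once more to kill the $\Rrel$-terms, contributes a term of order $\alpha \simi^2 \gamma d (2+\simi)^2(1+\simi)^2 / \{2\gamma[1-\simi(\alpha-1)(2+\simi)]\}$. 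Adding the mean and variance contributions and factoring out $\alpha\simi^2/(2\gamma)$ gives exactly the announced expression.

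The main obstacle I expect is the careful bookkeeping around the $\Rrel^2$ cross-terms: the constant in the lower bound on $\sigma^2$ must be tuned so that every $\Rrel$-dependent quantity appearing in both the mean-shift bound and the log-variance bound cancels \emph{simultaneously}, leaving a final expression depending only on $\simi, \gamma, d, \alpha$. A secondary difficulty is obtaining a clean, second-order-in-$(t-1)$ bound for $\alpha \ln t - \ln(\alpha t - (\alpha-1))$ that handles the two asymmetric cases $r_y < r_x$ and $r_y \geq r_x$ on the same footing, so that the resulting factor $[1-\simi(\alpha-1)(2+\simi)]^{-1}$ matches exactly the denominator that arises from the mean term.
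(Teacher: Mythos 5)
Your proposal takes essentially the same route as the paper: the exact closed-form Rényi divergence between two isotropic Gaussians of different means and variances, a two-sided bound $(1+\simi)^{-2} \le s_y^2/s_x^2 \le (1+\simi)^2$ derived from relative sensitivity (which simultaneously yields the admissible range of $\alpha$ and the second-order-in-$(t-1)$ control of the log-variance term), and absorption of the $\Rrel^2$ contributions via the hypothesis on $\sigma^2$. The only real difference is in the bookkeeping of the mean-shift term, and it is precisely the cross-term cancellation you flag as the main obstacle: the paper lower-bounds $\alpha s_y^2+(1-\alpha)s_x^2$ directly in terms of $\sqnorm{\cR(x)-\cR(y)}$ via Young's inequality combined with relative sensitivity (which is what lets the weaker constant $1-\simi(\alpha-1)$ in the $\sigma^2$ condition suffice), whereas your separate numerator/denominator estimates need the numerator expressed through $\simi^2\min(\sqnorm{\cR(x)},\sqnorm{\cR(y)})$ for your claimed intermediate bound $s_\alpha^2 \ge s_x^2\,[1-\simi(\alpha-1)(2+\simi)]$ (which, taken literally with $s_x^2$ on the right, is off by a factor $(1+\simi)^2$ in the worst case) to recombine into the stated $\epsilon$.
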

The proof is mostly technical, we thus defer it to Appendix~\ref{app:proofs}. Theorem~\ref{thm:renyi-sgm} shows that $\RGM$ can provide meaningful privacy guarantees. For a fixed $\gamma$, the guarantee is as strong as $\simi^2$ is small. This is in line with the intuition presented above: when $\simi^2$ is small, $\cR(x)$ and $\cR(y)$ (for $x \sim y$) have similar norms, and these norms are less sensitive.

\paragraph{Truncated Concentrated DP (tCDP).} Theorem~\ref{thm:renyi-sgm} can directly be turned into a $(\eta^2[\gamma^{-1} + d(2+\eta)^2(1+\eta)^2], 1 + (2\eta(2+\eta))^{-1})$-tCDP result. This is not surprising as the Relative Gaussian Mechanism can be seen as a multidimensional extension of the Gaussian Smooth Sensitivity mechanism~\citep{bun2018composable}, which offers comparable guarantees if $\cR(x) \in \R$. More details can be found in Appendix~\ref{app:gss}.

\paragraph{Scale of the noise.} The scale of the noise is controlled by the parameter $\gamma$. Indeed, for a fixed $\gamma$, our result suggests to set the baseline variance as $\sigma^2 = \gamma \simi^{-2} (1 - \simi(\alpha-1)) \Rrel^2$. As such, small values of $\gamma$ will lead to small noise addition (both in the baseline and the relative term), but will decrease privacy guarantees. Conversely, higher values of $\gamma$ require more noise for better privacy guarantees.

\paragraph{Arbitrary privacy guarantees cannot be achieved.} Although parameter $\gamma$ controls the level of the privacy guarantee, not all values of $\alpha$ and $\epsilon$ are achievable.
This is in stark contrast with the classical $\GM$ ($\simi = 0$), where increasing the noise $\sigma$ always improves privacy. This discrepancy is due to the fact that scaling noise with $\norm{\cR(x)}$ already releases some information about the input. Sadly, this information cannot be privatized using more baseline noise $\sigma^2$ without a priori bounds on $\norm{\cR(x)}$. Nonetheless, we emphasize that when $\eta \rightarrow 0$, all values of $\alpha$ and $\epsilon$ are possible.

Theorem~\ref{thm:renyi-sgm} implies that $\epsilon \geq 2\alpha\simi^2 d$, where $d$ is the dimension of the output of $\cR$. Consequently, $\RGM$ is more likely to give good privacy guarantees on small-dimensional queries. Note that this is tight, as discussed below. To mitigate this issue, one can either (i) restrict the query to a subset of its coordinates, or (ii) adapt the query to decrease the value of~$\simi$ (see discussion in Section~\ref{sec:enforcing-rel-sens}).

\paragraph{Conversion to $(\epsilon,\delta)$-DP.} 
Using Proposition~3 of \citet{mironov2017renyi}, we can convert the RDP guarantee given in Theorem~\ref{thm:renyi-sgm} to classical DP. For clarity of discussion, we give a closed-form expression of the differential privacy guarantees for the Relative Gaussian Mechanism in Corollary~\ref{cor:dp-sgm}. We stress that better guarantees can be obtained by numerically optimizing the bound obtained from Proposition~3 of \citet{mironov2017renyi}, and provide a script to choose the best values of $\alpha$ and $\gamma$ in the supplementary.
\begin{corollary}[Conversion to $(\epsilon,\delta)$-DP]
    \label{cor:dp-sgm}
    Let $0 \le \delta \le 1$. We assume that $\gamma^{-1} \ge 4 (2+\simi)^2 \log(1/\delta)$ or that $d \ge 4 \log(1/\delta) / (1 + \simi)^2$, and use the same notations as in Theorem~\ref{thm:renyi-sgm}. Then, $\RGM$ satisfies $(\epsilon, \delta)$-differential privacy with parameter $\epsilon = \convtodp + 2 \sqrt{ \convtodp \log(1/\delta)}$, where $\convtodp = \tfrac{\eta^2}{\gamma} + \eta^2(2+\eta)^2(1 + \eta)^2 d$.
\end{corollary}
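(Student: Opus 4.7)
The plan is to apply the standard RDP-to-DP conversion of Proposition~3 in \citet{mironov2017renyi} --- which turns any $(\alpha,\varepsilon)$-RDP guarantee into an $(\varepsilon + \log(1/\delta)/(\alpha-1), \delta)$-DP guarantee --- and then to tune $\alpha$ to minimize the resulting bound. Writing $\beta = \alpha - 1$ and recognizing that $\convtodp = \eta^2/\gamma + \eta^2(2+\eta)^2(1+\eta)^2 d = (\eta^2/\gamma)\bigl(1 + \gamma d(2+\eta)^2(1+\eta)^2\bigr)$, the RDP bound of Theorem~\ref{thm:renyi-sgm} rewrites compactly as
\begin{equation*}
\varepsilon(\alpha) \;=\; \frac{(1+\beta)\,\convtodp/2}{1-\eta\beta(2+\eta)}.
\end{equation*}

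Next, I would restrict attention to values of $\beta$ satisfying $\eta\beta(2+\eta) \leq 1/2$, which is strictly inside the admissible window $\beta < 1/(\eta(2+\eta))$ required by Theorem~\ref{thm:renyi-sgm}. On this range the denominator is at least $1/2$, so $\varepsilon(\alpha) \leq (1+\beta)\convtodp$, and combining with the Mironov conversion yields $\epsilon(\alpha) \leq \convtodp + \beta\convtodp + \log(1/\delta)/\beta$. Minimizing this expression over $\beta > 0$ by zeroing the derivative $\convtodp - \log(1/\delta)/\beta^2$ gives $\beta^{\star} = \sqrt{\log(1/\delta)/\convtodp}$, at which the two $\beta$-dependent terms are equal and the bound becomes exactly $\convtodp + 2\sqrt{\convtodp\,\log(1/\delta)}$, matching the target expression.

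It then remains to verify that $\beta^{\star}$ indeed lies in the truncated range $\eta\beta(2+\eta) \leq 1/2$, equivalently $\convtodp \geq 4\eta^2(2+\eta)^2\log(1/\delta)$. Because $\convtodp$ is a sum of two non-negative terms, it suffices that either term alone exceeds the threshold: the term $\eta^2/\gamma$ does so iff $\gamma^{-1} \geq 4(2+\eta)^2\log(1/\delta)$, and the term $\eta^2(2+\eta)^2(1+\eta)^2 d$ does so iff $d \geq 4\log(1/\delta)/(1+\eta)^2$, which are precisely the two sufficient conditions listed in the corollary. No step is deep: the main design choice is bounding $1/(1-\eta\beta(2+\eta))$ by $2$ rather than keeping it exact, which trades a factor of two in the leading constant (producing $\convtodp$ rather than $\convtodp/2$) for a clean closed-form optimizer and the simple sufficient conditions above. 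A fully optimized $\alpha$ would instead require solving a messier one-dimensional minimization --- exactly the numerical refinement the authors mention just before the corollary.
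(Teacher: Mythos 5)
Your proposal is correct and follows essentially the same route as the paper's proof: apply the Mironov RDP-to-DP conversion, restrict to $\eta(2+\eta)(\alpha-1)\le 1/2$ so the denominator can be bounded by $1/2$, minimize $\alpha\convtodp + \log(1/\delta)/(\alpha-1)$ at $\alpha = 1+\sqrt{\log(1/\delta)/\convtodp}$, and check admissibility of that $\alpha$ via the two stated sufficient conditions. The algebra, the choice of truncation, and the verification step all match the paper's argument.
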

We prove this result in Appendix~\ref{sup:proof-of-conv-to-dp}, but a similar one can be deduced using tCDP~\citep[Proposition 6]{bun2018composable}. While this result does not allow arbitrary privacy guarantee, we stress that meaningful guarantees can still be achieved. For instance, if $\eta=1e\text{-}3$, $d=10$, $\delta=1e\text{-}8$, and $\gamma = 100 \eta^2$, the Relative Gaussian mechanism guarantees $(\epsilon, \delta)$-DP with $\epsilon \approx 0.86$.

\paragraph{Subsampling.} We can also leverage the tCDP result to directly obtain a subsampling result~\citep[Theorem 13]{bun2018composable}. However, note that the Relative L2 sensitivity assumption should be satisfied for each batch of data. This can be demanding, in particular for gradient descent for which we will see that we will actually use a \emph{local} version of relative sensitivity. 

\subsection{Privacy Loss and Comparison with GM}
Let us consider that we use the relative sensitivity as a local sensitivity to set the noise level for disclosing output $\cA(x)$. In this case, guaranteeing $(\alpha, \varepsilon_\star)$-Rényi-DP when releasing output $\cA(x)$ requires setting the noise as $ \sigma^2_{\rm abs} = \frac{\alpha}{2\varepsilon_\star} (\simi^2 \sqnorm{\rx} + \Rrel^2)$. Unfortunately, as explained before, local sensitivity does \emph{not} guarantee differential privacy. If we were to use the same level of noise in the Relative Gaussian mechanism, this would correspond to $\gamma = \frac{\alpha \simi^2}{2\varepsilon_\star}$, and $\sigma^2 = \gamma \simi^{-2} \Rrel^2$. In particular, Theorem~\ref{thm:renyi-sgm} tells us that this choice actually guarantees RDP with parameter:
\begin{equation}
    \!\!\varepsilon =  \frac{\varepsilon_\star}{1 - \simi(\alpha - 1)(2 + \simi)} + \frac{\alpha d \simi^2 (2 + \simi)^2(1 + \simi)^2}{2(1 - \simi(\alpha - 1)(2 + \simi))}.
\end{equation}
The first term corresponds to the target privacy level $\varepsilon_\star$, weighted by a factor which is bounded by $2$ as long as $ 2 \simi(2+\simi) (\alpha - 1) \leq 1$, and goes to $1$ as $\simi$ decreases (for a fixed $\alpha$). The second term corresponds to the \emph{privacy loss} incurred by using the norm of the current output to set the noise level. Note that we see from Theorem~\ref{thm:renyi-sgm} that this term is independent of $\gamma$ and $\sigma^2$: it corresponds to a baseline loss that is paid for using a local form of sensitivity. We would get rid of this term if all possible queries $\cR(x)$ had the same norm, and this norm was public. However, this is a very strong assumption that generally does not hold (or requires very high absolute sensitivity bounds $\Rabs^2$). Note that it is tempting to use another output $\cR(y)$ to set the noise level, and thus decorrelate the noise level from the specific input that we consider. However, $\cR(y)$ would not be independent from $\cR(x)$ since $x \sim y$. 

This privacy loss term explains why using arbitrary large $\gamma$ does not lead to arbitrary good privacy guarantees. However, as long as $\simi$ is small enough compared to $\alpha$, \emph{the privacy loss is purely additive}. This means that if the dimension $d$ is not too large ($d \leq \gamma^{-1} / 36$ for $\simi \leq 1$, more for small $\simi$), we are safe using the relative Gaussian mechanism with minimal privacy overhead. Note that the $d$ term comes from the fact that we use Gaussian noise, and other noise distributions might incur other dependencies~\citep{bun2019average}. 

\paragraph{Standard vs. Relative Gaussian Mechanism.} This ``privacy loss'' point of view allows us to reason about the noise introduced by the Relative Gaussian Mechanism, versus the standard one. Indeed, let us neglect the additive privacy loss term. In this case, as argued in the previous paragraph, the privacy guarantees are comparable to the standard Gaussian mechanism with local sensitivity $\simi^2 \sqnorm{\cR(x)} + \Rrel^2$. In particular, which mechanism yields the best utility (less noise for a given privacy level) depends on which sensitivity bound is the tightest. If $\Rabs^2 \geq \simi^2 \max_{x} \sqnorm{\cR(x)} + \Rrel^2$ then the relative Gaussian mechanism is always better, because it will lead to similar guarantees with less noise overall. Otherwise, some outputs might be noised more with one mechanism and less with another. This is highly application-specific, as it is conditioned by the structure of the outputs.

\paragraph{Tightness.} One natural question that arises is the tightness of Theorem~\ref{thm:renyi-sgm}. Due to the parallel with local sensitivity, the first term is tight up to the (usually small) multiplicative factor. The second term is also tight up to a factor $1/2$ in the limit of small $\simi$, thanks to the tightness of the inequality used to obtain it. We discuss this in Appendix~\ref{app:tightness}.

\section{THE SPECIAL CASE OF GRADIENT DESCENT}
\label{sec:gradient-descent}
An important application of $\RGM$ is private Gradient Descent (GD). In this section, we describe it in the quadratic case, for which we estimate the values of $\eta$ and $\Rrel$ and propose a clipping-like procedure.

\subsection{RGM for Gradient Descent}
We consider a function $f: \RR^d \times \cD \rightarrow \RR$, where $\cD$ is a set of possible datasets. Assume that the gradients of $f$ (w.r.t. its first parameter) verify the relative sensitivity assumption. Given a dataset $D\in\cD$, we can then privately minimize function
$f$ using the following private gradient descent algorithm, where $\gamma, \sigma > 0$ are parameters of the RGM, and $\tau > 0$ is a step size: 
\begin{align} 
    \theta_{t+1} = \theta_t - \tau \RGM(\cR_{\theta_t})(D),\label{eq:gd} \\ 
    \text{ where } \cR_{\theta_t}(D) = \nabla f(\theta_t; D)\,. \nonumber 
\end{align}
We remark that the form of $\RGM$'s noise allows a tight analysis of the utility, as shown below.
\begin{theorem} \label{thm:utility} 
Let $f: \RR^d \times \cD \rightarrow \RR$ be $\mu$-strongly-convex and $L$-smooth in its first parameter~(see, \emph{e.g.}, \citet{nesterov2018lectures}). Let $D \in \cD$ be a dataset, and $\theta_\star$ be the minimizer of $f(\cdot; D)$. Assume that $f$'s gradients satisfy $(\simi, \Rrel)$-relative sensitivity, and that $\gamma, \sigma$ are set as in Theorem~\ref{thm:renyi-sgm}. Then if $\tau \leq (L + \gamma)^{-1}$, the iterates obtained by~\eqref{eq:gd} satisfy, for all $t \ge 0$,
    \begin{equation}
        \esp{\sqnorm{\theta_t - \theta_\star}} \leq (1 - \tau \mu)^t \sqnorm{\theta_0 - \theta_\star} + \frac{\tau \sigma^2}{\mu}.
    \end{equation}
\end{theorem}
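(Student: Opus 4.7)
The plan is to carry out a noisy gradient descent analysis, adapted to the fact that the variance of the noise injected by $\RGM$ depends on the current gradient. Let $\xi_t$ denote the $\RGM$ noise at iteration $t$; it is zero-mean and satisfies $\mathbb{E}[\|\xi_t\|^2 \mid \theta_t] = \gamma\|\nabla f(\theta_t)\|^2 + \sigma^2$. I would first expand $\|\theta_{t+1}-\theta_\star\|^2$ from the update rule and take the conditional expectation given $\theta_t$: the two cross terms involving $\xi_t$ vanish since $\xi_t$ has zero mean and is independent of $\theta_t$, giving
\begin{equation*}
\mathbb{E}\!\left[\|\theta_{t+1}-\theta_\star\|^2 \,\big|\, \theta_t\right] = \|\theta_t-\theta_\star\|^2 - 2\tau\langle\theta_t-\theta_\star,\nabla f(\theta_t)\rangle + \tau^2(1+\gamma)\|\nabla f(\theta_t)\|^2 + \tau^2\sigma^2 .
\end{equation*}

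The key step is then to absorb the extra $\tau^2\gamma\|\nabla f(\theta_t)\|^2$ coming from the $\RGM$ noise---a term that is absent in standard noisy GD analyses. The natural tool is co-coercivity of $\nabla f$, which follows from $L$-smoothness together with $\nabla f(\theta_\star)=0$ and gives $\|\nabla f(\theta_t)\|^2 \le L\langle\nabla f(\theta_t), \theta_t-\theta_\star\rangle$. Substituting into the previous display yields
\begin{equation*}
\mathbb{E}\!\left[\|\theta_{t+1}-\theta_\star\|^2 \,\big|\, \theta_t\right] \le \|\theta_t-\theta_\star\|^2 - \tau\bigl[2 - \tau L(1+\gamma)\bigr]\langle\nabla f(\theta_t), \theta_t-\theta_\star\rangle + \tau^2\sigma^2 .
\end{equation*}
The step-size constraint $\tau\le (L+\gamma)^{-1}$ is exactly what is needed to keep the bracket at least $1$, so combining with $\mu$-strong convexity $\langle\nabla f(\theta_t), \theta_t-\theta_\star\rangle \ge \mu\|\theta_t-\theta_\star\|^2$ yields the one-step contraction
\begin{equation*}
\mathbb{E}\!\left[\|\theta_{t+1}-\theta_\star\|^2 \,\big|\, \theta_t\right] \le (1-\tau\mu)\|\theta_t-\theta_\star\|^2 + \tau^2\sigma^2 .
\end{equation*}

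The final step is routine: iterate this one-step bound over $t$ iterations, take total expectations, and sum the resulting geometric series
\begin{equation*}
\sum_{k=0}^{t-1}(1-\tau\mu)^k \, \tau^2\sigma^2 \le \frac{\tau^2\sigma^2}{\tau\mu} = \frac{\tau\sigma^2}{\mu},
\end{equation*}
which yields the claimed bound.

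The main obstacle is handling the gradient-dependent noise variance of $\RGM$, which produces a term quadratic in $\|\nabla f(\theta_t)\|$ that is not present in standard DP-GD analyses. Conceptually this term behaves like an additional $\gamma$-smoothness, which is exactly why the step-size condition involves $L+\gamma$ rather than $L$ alone; technically, co-coercivity of $\nabla f$ is the right tool to convert it into the same $\langle\nabla f(\theta_t),\theta_t-\theta_\star\rangle$ quantity that the deterministic part of the update already produces, so that a single contraction factor $(1-\tau\mu)$ can absorb both.
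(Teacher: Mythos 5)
Your proof is correct and follows essentially the same route as the paper's: the paper absorbs the extra $\tau^2(1+\gamma)\sqnorm{\nabla f(\theta_t)}$ term via the Bregman-divergence bound $\sqnorm{\nabla f(\theta_t)} \le 2L\,D_f(\theta_\star,\theta_t)$ rather than co-coercivity, but these are the same smoothness inequality packaged slightly differently, and the resulting one-step contraction $(1-\tau\mu)\sqnorm{\theta_t-\theta_\star}+\tau^2\sigma^2$ and geometric summation are identical. One caveat, which you inherit from the paper itself: keeping your bracket $2-\tau L(1+\gamma)$ at least $1$ actually requires $\tau \le \bigl[L(1+\gamma)\bigr]^{-1}$ rather than $\tau\le(L+\gamma)^{-1}$ as written in the theorem (the paper's own proof also silently uses the former condition), so your remark that the stated constraint is ``exactly what is needed'' is slightly off, though the substance of the argument is unaffected.
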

The proof, along with a similar result in the general convex case are in Appendix~\ref{app:utility}. It relies on the fact that standard GD proofs already require to bound a $\sqnorm{\nabla f(\theta_t;D)}$ term by choosing an appropriate step-size, so the norm-scaled noise term can be accounted for in the same way, by only (slightly) decreasing the step size. Contrary to the usual DP-GD, which privatizes gradients using $\GM$, the variance term is $\tfrac{\tau\sigma^2}{\mu}$, where $\sigma^2$ now depends on $\Rrel$ which can be much smaller than the absolute sensitivity. In the remainder of this section, we exhibit settings in which the gradients verify relative sensitivity.
    
\subsection{Relative sensitivity for linear regression}
We now consider the specific case of quadratic objectives. More specifically, $f$ is of the form
\begin{align} 
    f(\theta; X,y) = \frac{1}{n}\sum_{i=1}^n \frac{1}{2} \sqnorm{X_i^\top \theta - y_i} + \frac{\reg}{2}\sqnorm{\theta},\label{eq:quad_objective}
\end{align} 
where $X\in \R^{d\times n}$ and $\y \in \R^n$. We denote by $(X_i, y_i) \in \R^d \times \RR$ the $i$-th data record (\ie the $i$-th column of $X$ and $i$-th element of $y$). Let $(X',y') \sim (X,y)$ be a dataset that, w.l.o.g, only differs from $(X,y)$ on its first record $(X_0, y_0)$. In the following, we denote $f = f(\cdot; D)$ and $f' = f(\cdot; D')$.

Let $\Id \in \R^{d\times d}$ be the identity matrix and let us denote $A = \frac{1}{n}XX^\top + \reg \Id \in \R^{d\times d}$, $A_i = X_i X_i^\top \in \R^{d \times d}$, $b = \frac{1}{n}X y$, and $b_i = \frac{1}{n}X_i y_i$ (and similarly for $A'$ and $b'$). For $\theta\in\RR^d$, $\nabla f(\theta) = A\theta - b$ and $\nabla f'(\theta) = A'\theta - b'$. The difference between two gradients writes:
\begin{align}
    &n^2\|\nabla f(\theta) - \nabla f^\prime(\theta)\|^2 = \sqnorm{(A_0 - A^\prime_0)\theta - b_0 + b_0^\prime}\nonumber\\
    &= \sqnorm{(A_0\! - \!A^\prime_0)A^{-1}(A\theta - b) + \!(A_0\! - A^\prime_0)A^{-1} b  - b_0\! + b_0^\prime}\nonumber\\ 
    &\leq 3\left[\sqnorm{A_0 A^{-1}} + \sqnorm{A^\prime_0 A^{-1}} \right]\sqnorm{\nabla f(\theta)}  \label{eq:sim_decomp}\\
    &\qquad + 3\sqnorm{\nabla f_0(A^{-1}b) - \nabla f_0^\prime(A^{-1}b)}, \nonumber
\end{align}
with $\norm{A} = \lambda_{\max}(A)$ being the $2$-norm for matrices. This bound hints at relative sensitivity, and we now discuss the corresponding $\simi$ and $\Rrel$ terms. We first define $L,\mu > 0$ the bounds on the largest and smallest eigenvalues of all $A$, \emph{i.e.}, $L \geq \norm{A}$, and $\mu \leq \lambda_{\min}(A)$. Then, denote $\kappa = L/\mu$.

\paragraph{General functions.} All derivations above consider quadratic objectives. Yet, similar terms with corresponding intuitions can be derived for arbitrary convex functions, as presented in Appendix~\ref{app:general_functions}.

\paragraph{The absolute term $\boldsymbol{\Rrel}$.} By using the relative framework, we only have to bound the difference between gradients at $A^{-1} b$ rather than all points, the rest being handled by the norm scaling. When an approximation of $A^{-1}b$ is known, this gives much tighter guarantees.
Otherwise, this term writes: $\norm{(A_0 - A_0^\prime)A^{-1}b - b_0 + b_0^\prime} \leq \norm{(A_0 - A_0^\prime)A^{-1}}\norm{b} + \norm{b_0 - b_0^\prime}$. In the end, one only needs to control the norm of $b$ and $b_0 - b_0^\prime$, which can be done via clipping. 

\paragraph{The relative term $\boldsymbol{\simi}$.} The term in front of the gradient norm can be bounded as: 
\begin{equation} \label{eq:condition_Lrel}
    \!\!\!\!\sqnorm{A_0 A^{-1}} \!=\! \sqnorm{X_0} X_0^\top A^{-2} X_0 \leq \kappa (X_0^\top A^{-1} X_0)^2\!.
\end{equation}  
While a direct bounding writes $\sqnorm{A_0 A^{-1}} \leq \max_i \|A_i\|^2 / \mu^2$, we will see that for well-behaved distributions, the bound only depends linearly on $\kappa$, or even not at all. This is the case for instance when the data is orthogonal, as shown in the following example.

\emph{Orthogonal data.} Relative sensitivity can be easily bounded for orthogonal data, \emph{i.e.} if either $X_i^\top X_j = \sqnorm{X_i}$ or $X_i^\top X_j = 0$. Consider that at least half of the dataset is fixed, and contains all different $X_i$ in equal proportions (so, $d^{-1}$). In this case, $A \succcurlyeq \frac{1}{2d} \sum_{i=1}^d X_i X_i^\top$ so $\sqnorm{X_i}X_i^\top A^{-2} X_i \leq 2d$. Note that the relative sensitivity is independent of the scale of each $X_i$. See Appendix~\ref{app:orthogonal_data} for more detailed derivations.

However, one can remark that we have made an extra assumption on top of data orthogonality, which is that half of the dataset is assumed to be fixed. This is because if in the dataset $X$ one dimension only has a single data point and this point is removed, then $A$ will not be invertible anymore. This could be fixed through regularization but would lose independence from $\kappa$. While this is unavoidable in the worst case,  removing a point only mildly affects the covariance matrix $A$ when the data is ``well-behaved'', leading to small $\eta$. Fortunately, there exists a mechanism designed specifically for leveraging such properties, which is Propose-Test-Release~\citep{ptr}. We will see in the next section how to instantiate it in our case. 

\subsection{Enforcing relative L2 sensitivity}
\label{sec:enforcing-rel-sens}

Sensitivity bounds are often hard to evaluate, and it is generally desirable to enforce them in practical applications, for instance using gradient clipping to restrict the absolute sensitivity.
In the quadratic case, the absolute term $\Rrel$ can be controlled by clipping the $b_i$'s. Yet, $\norm{(A_0 - A^\prime_0)A^{-1}}$  should also be controlled, hopefully without a too strong dependence on the conditioning of $A$. This can be done by clipping the $X_i$, \emph{i.e.}, shrinking their norm when it is too large. 
\begin{proposition}[Clipping] \label{prop:clipping}
    Let $C \in \R^{d\times d}$, $R_c \in \R$, and let $\tilde{X}$ be the clipped dataset, obtained as $\tilde{X}_i = R_c X_i / \max(R_c, (\sqnorm{X_i} X_i^\top C^{-2} X_i)^{\frac{1}{4}})$. If $\tilde{A} = \frac{1}{n}\sum_{i=1}^n \tilde{X}_i\tilde{X}_i^\top + \reg \Id \succcurlyeq \rho C$ for some $\rho > 0$, then $\tilde{X}$ verifies relative sensitivity with constant $\simi^2 = \frac{6 R_c^4}{\rho^2 n^2}$.
\end{proposition}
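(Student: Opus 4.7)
The plan is to chain the quadratic gradient-difference decomposition~\eqref{eq:sim_decomp} with the clipping construction and the preconditioning assumption $\tilde{A}\succcurlyeq\rho C$. I start by applying~\eqref{eq:sim_decomp} to the clipped objective built from $\tilde{X}$, obtaining
\begin{equation*}
n^2 \sqnorm{\nabla \tilde{f}(\theta) - \nabla \tilde{f}'(\theta)} \leq 3\bigl[\sqnorm{\tilde{A}_0 \tilde{A}^{-1}} + \sqnorm{\tilde{A}'_0 \tilde{A}^{-1}}\bigr]\sqnorm{\nabla \tilde{f}(\theta)} + 3\sqnorm{\nabla \tilde{f}_0(\tilde{A}^{-1}\tilde{b}) - \nabla \tilde{f}'_0(\tilde{A}^{-1}\tilde{b})},
\end{equation*}
so that the first bracket governs $\simi^2$ while the residual would determine $\Rrel^2$. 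Since the proposition only concerns the $\simi$ constant, the main task is to bound the bracket by $2R_c^4/\rho^2$: combined with the factor $3$ and the division by $n^2$, this reproduces the claimed $\simi^2 = 6R_c^4/(\rho^2 n^2)$.

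The core estimate is $\sqnorm{\tilde{A}_i\tilde{A}^{-1}}\leq R_c^4/\rho^2$ for each $i\in\{0,0'\}$. By the rank-one identity from~\eqref{eq:condition_Lrel} applied to the clipped data, $\sqnorm{\tilde{A}_i\tilde{A}^{-1}} = \sqnorm{\tilde{X}_i}\,\tilde{X}_i^\top\tilde{A}^{-2}\tilde{X}_i$. The clipping threshold is designed precisely so that $\sqnorm{\tilde{X}_i}\tilde{X}_i^\top C^{-2}\tilde{X}_i\leq R_c^4$: when the clip is inactive the threshold $R_c$ already dominates, and when active the scaling factor is chosen so that equality holds. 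It then remains to transfer the $C^{-2}$ bound to the $\tilde{A}^{-2}$-quadratic form evaluated at $\tilde{X}_i$. From $\tilde{A}\succcurlyeq\rho C$, operator monotonicity of the matrix inverse gives $\tilde{A}^{-1}\preccurlyeq\rho^{-1}C^{-1}$, equivalently $\tilde{A}^{-1/2}C\tilde{A}^{-1/2}\preccurlyeq\rho^{-1}I$. I would then write $\tilde{X}_i^\top\tilde{A}^{-2}\tilde{X}_i = \sqnorm{\tilde{A}^{-1/2}(\tilde{A}^{-1/2}\tilde{X}_i)}$ and apply this symmetric estimate once per $\tilde{A}^{-1/2}$ factor, swapping in a $C$-factor each time, to conclude $\tilde{X}_i^\top\tilde{A}^{-2}\tilde{X}_i \leq \rho^{-2}\tilde{X}_i^\top C^{-2}\tilde{X}_i$.

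Combining the clipping bound with this matrix inequality then yields $\sqnorm{\tilde{A}_i\tilde{A}^{-1}}\leq R_c^4/\rho^2$; summing over $i\in\{0,0'\}$ and multiplying by $3$ from~\eqref{eq:sim_decomp} delivers $n^2\simi^2 \leq 6R_c^4/\rho^2$, which is exactly the proposition's claim. The absolute constant $\Rrel$ (implicit in the statement) would be handled in parallel by bounding the residual $\sqnorm{\nabla\tilde{f}_0(\tilde{A}^{-1}\tilde{b}) - \nabla\tilde{f}'_0(\tilde{A}^{-1}\tilde{b})}$ via triangle inequality after clipping the $\tilde{b}_i$'s, and by controlling $\sqnorm{\tilde{A}^{-1}\tilde{b}}$ through the same preconditioning assumption.

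The main obstacle is the inverse-squared step: the map $M\mapsto M^2$ is not operator monotone on the positive cone, so $\tilde{A}\succcurlyeq\rho C$ does \emph{not} directly imply $\tilde{A}^{-2}\preccurlyeq\rho^{-2}C^{-2}$ in the Loewner order. The technical challenge is therefore to arrange the two successive insertions of the symmetric consequence $\tilde{A}^{-1/2}C\tilde{A}^{-1/2}\preccurlyeq\rho^{-1}I$ (or a suitable Cauchy--Schwarz-style replacement) in such a way that the quadratic form $\tilde{X}_i^\top\tilde{A}^{-2}\tilde{X}_i$ is controlled by $\tilde{X}_i^\top C^{-2}\tilde{X}_i$ with the clean factor $\rho^{-2}$, while absorbing any residual slack into the overall constant $6$ already present in the statement.
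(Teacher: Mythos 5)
Your overall route is the same as the paper's: apply \eqref{eq:sim_decomp} to the clipped objective, use the rank-one identity $\sqnorm{\tilde{A}_i \tilde{A}^{-1}} = \sqnorm{\tilde{X}_i}\,\tilde{X}_i^\top \tilde{A}^{-2}\tilde{X}_i$, observe that the clipping rule enforces $\sqnorm{\tilde{X}_i}\,\tilde{X}_i^\top C^{-2}\tilde{X}_i \le R_c^4$ by construction, and transfer from $C^{-2}$ to $\tilde{A}^{-2}$ via $\tilde{A} \succcurlyeq \rho C$; the bookkeeping ($2$ terms, factor $3$, division by $n^2$) giving $\simi^2 = 6R_c^4/(\rho^2 n^2)$ is correct. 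The gap is the transfer step, which you rightly identify as the crux but do not close. Your proposed patch --- inserting the consequence $\tilde{A}^{-1/2} C \tilde{A}^{-1/2} \preccurlyeq \rho^{-1} I$ ``once per $\tilde{A}^{-1/2}$ factor'' --- breaks at the second insertion: the first application gives $\tilde{X}_i^\top \tilde{A}^{-2}\tilde{X}_i \le \rho^{-1}\, \tilde{X}_i^\top \tilde{A}^{-1/2} C^{-1} \tilde{A}^{-1/2} \tilde{X}_i$, and to continue you would need $\tilde{A}^{-1/2} C^{-1} \tilde{A}^{-1/2} \preccurlyeq \rho^{-1} C^{-2}$, i.e.\ $B C B^\top \preccurlyeq \rho^{-1} C$ for $B = \tilde{A}^{-1/2}C^{1/2}$, which does \emph{not} follow from $BB^\top \preccurlyeq \rho^{-1} I$ when $\tilde{A}$ and $C$ do not commute. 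Indeed the target inequality $\tilde{A}^{-2} \preccurlyeq \rho^{-2} C^{-2}$ is false in general, precisely because $M \mapsto M^2$ is not operator monotone: $A = \bigl(\begin{smallmatrix}3&1\\1&1\end{smallmatrix}\bigr) \succcurlyeq B = \bigl(\begin{smallmatrix}2&1\\1&1\end{smallmatrix}\bigr) \succ 0$, yet $A^2 \not\succcurlyeq B^2$.

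To be fair, the paper's own proof is a one-liner and its accompanying procedure asserts $\tilde{A}^{-2}\preccurlyeq \rho^{-2}C^{-2}$ outright, so you have put your finger on a genuine soft spot rather than invented one. What \emph{can} be proved for non-commuting matrices is the submultiplicative bound $\|\tilde{A}^{-1}C\|^2 \le \|\tilde{A}^{-1/2}\|^2\,\|\tilde{A}^{-1/2}C^{1/2}\|^2\,\|C^{1/2}\|^2 \le \rho^{-2}\kappa_C$ (with $\kappa_C$ the condition number of $C$), whence $\tilde{X}_i^\top\tilde{A}^{-2}\tilde{X}_i \le \rho^{-2}\kappa_C\,\tilde{X}_i^\top C^{-2}\tilde{X}_i$ and $\simi^2 = 6\kappa_C R_c^4/(\rho^2 n^2)$; this extra condition-number factor is exactly the $\kappa_\Sigma$ that surfaces in Proposition~\ref{prop:ellipse_gaussian}. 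So either restrict to $C$ commuting with $\tilde{A}$ (e.g.\ $C = \Id$, where the clean constant does hold since $\tilde{A}^{-2}\preccurlyeq \rho^{-2}\Id$ is then immediate), or accept the $\kappa_C$ degradation; as written, your argument does not establish the stated constant for arbitrary $C$.
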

The proof directly follows from~\eqref{eq:sim_decomp},~\eqref{eq:condition_Lrel} and $\tilde{A}  \succcurlyeq \rho C$. Note that to guarantee cancellations as in~\eqref{eq:sim_decomp}, how $X_i$ is clipped should be independent of $X$ (and thus $A$), which is why we introduce matrix $C$. This result shows that given a dataset, we can enforce relative sensitivity bounds given sufficient clipping. Interestingly, using $C = \Id$ will just enforce a bound of $\tilde{\kappa}^2$ (the conditioning of the clipped covariance), but any $C$ that better captures the structure of the data will improve this bound. Thus, even loose estimates of the covariance can be used, whether they come from rough private approximations or expert data knowledge.

There are two substantial caveats to this result: $\rho$ is unknown, and $\tilde{A} \succcurlyeq \rho C$ needs to hold regardless of the dataset, and not for the specific dataset that we consider. While these seem to be particularly strong restrictions, if we can privately check that a given value of $\rho$ works for our specific dataset, then we can still guarantee differential privacy thanks to the Propose-Test-Release (PTR) framework~\citep{ptr}. 

\begin{proposition} \label{prop:ptr}
    Let $\rho > 0$, $C\in \R^{d\times d}$. Let $\Delta$ be the number of points $\tilde{X}_i$ that need to be changed from $\tilde{X}$ to obtain $\tilde{X}^\prime$ such that $\tilde{A}^\prime \preccurlyeq \rho C$. If $\tilde{A} - \rho C \preccurlyeq 0$ then $\Delta = 0$. Otherwise, $\Delta \geq \Delta_+ = \min \{|I_R|, \sum_{i \in I_R} \Xtilde_i^\top (\tilde{A} - \rho C)^{-1} \Xtilde_i \geq n\}$.
    In particular, for any $(\varepsilon, \delta)$, PTR writes: (i) Propose a bound $\rho$. (ii) Compute $\hat{\Delta} = \Delta_+ + {\rm Lap}(\varepsilon^{-1})$. (iii) If $\hat{\Delta} \leq -\log(\delta)/\varepsilon$, return $\emptyset$, otherwise, run (multiple instances of) $\RGM$ using $\rho$ to compute the relative sensitivity parameters (which guarantees $(\varepsilon_{\rm RGM}, \delta_{\rm RGM})$-DP). This mechanism is $(\varepsilon + \varepsilon_{\rm RGM}, \delta + \delta_{\rm RGM})$-DP. 
\end{proposition}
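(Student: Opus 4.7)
The proof separates into the combinatorial lower bound $\Delta\geq\Delta_+$ and the Propose-Test-Release (PTR) privacy analysis.

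For the combinatorial bound, set $M:=\tilde{A}-\rho C$; the case $M\preccurlyeq 0$ is immediate so assume otherwise. Let $I_R$ be any set of indices whose modification produces $\tilde{X}'$ with $\tilde{A}'\preccurlyeq\rho C$. Since the replacement terms $(\tilde{X}'_i)(\tilde{X}'_i)^\top$ are PSD, dropping them only weakens the inequality, giving
\[
\tilde{A} - \rho C \;\preccurlyeq\; \tilde{A} - \tilde{A}' \;\preccurlyeq\; \frac{1}{n}\sum_{i\in I_R} \tilde{X}_i \tilde{X}_i^\top.
\]
Left- and right-multiplying by $M^{-1/2}$ (restricting to the positive part of $M$, or using a pseudo-inverse) and taking the trace yields $\sum_{i\in I_R}\tilde{X}_i^\top M^{-1}\tilde{X}_i \geq n$, so by definition $|I_R|\geq\Delta_+$. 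Specializing to the optimal $I_R$ of size $\Delta$ then gives $\Delta\geq\Delta_+$.

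For the privacy guarantee, the plan is to invoke the standard PTR template. The true Hamming distance $\Delta$ is $1$-sensitive by the triangle inequality, so $\hat{\Delta}=\Delta+\mathrm{Lap}(\varepsilon^{-1})$ thresholded at $-\log(\delta)/\varepsilon$ yields an $(\varepsilon,\delta)$-DP abort/proceed decision: when $\rho$ is invalid (so $\Delta=0$), the Laplace tail bound gives passing probability at most $\delta/2$; when $\rho$ is valid, $\varepsilon$-indistinguishability between neighbors follows directly from the $1$-sensitivity and the Laplace mechanism. The inequality $\Delta_+\leq\Delta$ from Part~1 then shows that replacing $\Delta$ by $\Delta_+$ in the test only makes the decision more conservative, preserving the DP guarantee. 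Conditional on passing, Proposition~\ref{prop:clipping} ensures that the relative sensitivity assumption holds with the proposed $\rho$, so the subsequent RGM release is $(\varepsilon_{\mathrm{RGM}},\delta_{\mathrm{RGM}})$-DP via Theorem~\ref{thm:renyi-sgm} and its $(\epsilon,\delta)$-DP conversion. Basic composition then produces the claimed $(\varepsilon+\varepsilon_{\mathrm{RGM}},\delta+\delta_{\mathrm{RGM}})$-DP bound.

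The main obstacle is the sensitivity analysis of the test step: $\Delta_+$ is not itself a Hamming distance and depends on $\tilde{A}^{-1}$ through a min-cardinality problem, so it is not obviously $1$-sensitive. The cleanest way around this is to observe that the overall output depends on the noisy test only through the binary abort/proceed decision, and to sandwich this decision against the one produced by the idealized $\hat{\Delta}=\Delta+\mathrm{Lap}(\varepsilon^{-1})$ (which is $1$-sensitive) using monotonicity of the threshold together with $\Delta_+\leq\Delta$. Tracking the $\delta$ slack carefully through this sandwich, and ensuring it holds uniformly across neighbors, is the technical crux.
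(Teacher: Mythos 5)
Your proof of the combinatorial lower bound $\Delta \geq \Delta_+$ is correct and follows essentially the paper's route: relax the replacement of points to their removal (the added rank-one terms are PSD), conjugate the resulting operator inequality $\tilde{A}-\rho C \preccurlyeq \frac{1}{n}\sum_{i\in I_R}\Xtilde_i\Xtilde_i^\top$ by $(\tilde{A}-\rho C)^{-1/2}$, and replace the matrix inequality by a scalar one. The only difference is cosmetic: the paper bounds $\lambda_{\max}\bigl(\sum_{i\in I_R} Y_iY_i^\top\bigr)$ by $\sum_{i\in I_R}\|Y_i\|^2$, whereas you take the trace of $I_d \preccurlyeq \frac{1}{n}\sum_{i\in I_R} Y_iY_i^\top$, which actually yields the stronger threshold $nd$ in place of $n$ (both imply the stated $\Delta_+$). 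You also share with the paper the unaddressed corner case where $\tilde{A}-\rho C$ is indefinite rather than positive definite, so no complaint there.

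The gap is in the privacy part. The paper itself does not prove more than you do here --- it invokes the standard PTR template of \citet{vadhan2017complexity} and only establishes that $\Delta_+$ is an efficiently computable lower bound on $\Delta$ --- and you are right that the crux is that the statistic actually being noised is $\Delta_+$, not the $1$-sensitive Hamming distance $\Delta$. But your proposed resolution does not work. The inequality $\Delta_+ \leq \Delta$ together with monotonicity of the threshold only controls the \emph{soundness} direction (the probability of proceeding when $\rho$ is invalid, i.e., the $\delta$ term); it says nothing about the $\varepsilon$-indistinguishability of the abort/proceed bit between neighbors. Concretely, nothing in your argument excludes neighboring datasets $X \sim X'$ with $\Delta_+(X)=0$ and $\Delta_+(X')$ large (both being valid lower bounds on their respective $\Delta$'s, since the scores $\Xtilde_i^\top(\tilde{A}-\rho C)^{-1}\Xtilde_i$ all change when $\tilde{A}$ changes); the two abort probabilities would then differ by far more than a factor $e^{\varepsilon}$, and since $\emptyset$ is an observable output the mechanism would not be $(\varepsilon,\delta+\delta_{\rm RGM})$-DP. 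Closing the argument requires either showing that $\Delta_+$ itself has sensitivity $1$ (for instance by exhibiting it as a Hamming distance to an enlarged bad set), or running the Laplace test on $\Delta$ itself and using $\Delta_+$ only as a computable certificate --- neither of which your sandwich provides. A second, smaller slip: passing the test certifies (with probability $1-\delta$) that $\tilde{A}-\rho C \not\preccurlyeq 0$, which is weaker than the hypothesis $\tilde{A} \succcurlyeq \rho C$ of Proposition~\ref{prop:clipping} that you invoke to conclude relative sensitivity holds; one needs the bad set and the test margin to be set up so that the dataset \emph{and all its neighbors} satisfy $\tilde{A} \succcurlyeq \rho C$ on the pass event.
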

The proof can be found in Appendix~\ref{app:proof_ptr}, and more details on PTR can be found, \eg in~\citet[Section 3.2]{vadhan2017complexity}. This proposition means that the PTR framework can be implemented ``efficiently'': instead of testing all combinations, it is enough to compute and sort the $\Xtilde_i^\top (\tilde{A} - \rho C)^{-1} \Xtilde_i$, and check how many must be summed before reaching $n$. In the end, the procedure to enforce a given $\simi$ writes:
\begin{itemize}
    \item Input arbitrary $C \in \R^{d\times d}$, $R_c > 0$. Clip all $X_i$ to obtain $\Xtilde_i$ so that $\|\Xtilde_i\|^2 \Xtilde_i^\top C^{-2} \Xtilde_i \leq R_c^4$.
    \item Propose $\rho > 0$. Use PTR to privately test whether $\tilde{A} \succcurlyeq \rho C$ (Proposition~\ref{prop:ptr}). If not, abort. 
    \item If it does, then $\tilde{A}^{-2} \preccurlyeq \rho^{-2}C^{-2}$, and so $\eta^2 = \frac{6R_c^4}{\rho^2 n^2}$ can be chosen to run DP-GD with $\RGM$.
\end{itemize}
Interestingly, our approach combines several methods intended to overcome the problems of local sensitivity. We use the PTR framework to estimate the relative sensitivity parameter $\simi$. This is possible because local sensitivity is rather smooth in $\theta$, and we can efficiently bound how far $X$ is from a high local sensitivity dataset.

\subsection{Gaussian features}

We have just shown that a procedure based on PTR allows to obtain practical differential privacy guarantees for well-behaved datasets. Yet, a major question remains: \emph{how to choose $C, R_c$ and $\rho$}? First note that $\rho$ could be obtained by a binary search (paying at most log factors), and several values of clipping thresholds $R_c$ could be tried out. A key feature here is that \textbf{we only use PTR once at the beginning of training}, and do not need to rerun it at each epoch. Thus, many guesses can be tried out with limited impact on the overall privacy cost, which should be dominated by the actual gradient descent iterations. However, we give in this section a choice of $C, R_c$ and $\rho$ that works with high probability if $X_i \sim \cN(0, \Sigma)$. 

\begin{proposition} \label{prop:ellipse_gaussian}
    Let $X \in \R^{d \times n}$ such that its columns $X_i$ are drawn \emph{i.i.d.} from $\cN(0, \Sigma)$. Let us choose $C = \Sigma$, so that $\tilde{X}_i = R_c X_i / \max(R_c, X_i^\top C^{-1} X_i)^{\frac{1}{2}})$. Let $\dproba > 0$, $n \geq 4 \log(2d/\dproba) / 9$ and $\reg = 4 \norm{\Sigma} R_c^2\sqrt{\frac{\log(2d / \dproba)}{n}}$. Then, with probability at least $1 - \dproba$, the data sampled is such that
    \begin{equation}
        \rho = \frac{1}{2d(2\pi)^{d/2}}\int_{\R^d} \min(\sqnorm{u}, R_c^2) e^{-\frac{\sqnorm{u}}{2}}{\rm d} u,
    \end{equation}
    is always accepted by the PTR procedure from Proposition~\ref{prop:ptr} as long as $n \geq - \frac{\log(\delta) R_c^2}{2\varepsilon}$, leading to $\simi^2 = \frac{6 \kappa_\Sigma R_c^4}{\rho^2n^2}$, where $\kappa_\Sigma$ is the conditioning of $\Sigma$.
\end{proposition}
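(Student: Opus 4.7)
The plan is to verify the proposition in three steps: identify the population covariance of the clipped data, apply matrix Bernstein to get a high-probability lower bound on $\tilde A$, and then combine with Propositions~\ref{prop:ptr} and~\ref{prop:clipping} to read off PTR acceptance and $\simi^2$.

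First I would whiten the data by writing $X_i = \Sigma^{1/2} u_i$ with $u_i \sim \cN(0, I_d)$, so that $X_i^\top \Sigma^{-1} X_i = \|u_i\|^2$ and the clipped sample becomes $\tilde X_i = (R_c/\max(R_c, \|u_i\|))\,\Sigma^{1/2} u_i$. The outer-product expectation then factors as $\mathbb{E}[\tilde X_i \tilde X_i^\top] = \Sigma^{1/2}\mathbb{E}[(R_c^2/\max(R_c,\|u\|)^2)\, u u^\top]\Sigma^{1/2}$; rotational invariance of the standard Gaussian forces the inner expectation to be a scalar multiple of $I_d$, and that multiple is pinned down by the trace identity $(1/d)\,\mathbb{E}[\min(\|u\|^2, R_c^2)] = 2\rho$. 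Hence $\mathbb{E}[\tilde X_i \tilde X_i^\top] = 2\rho \Sigma$, and by construction $\|\tilde X_i\|^2 \leq \|\Sigma\| R_c^2$ almost surely.

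Next, matrix Bernstein applied to the centred summands $\tilde X_i \tilde X_i^\top - 2\rho \Sigma$ gives, in the sub-Gaussian regime ensured by $n \geq 4\log(2d/\nu)/9$, $\|\tfrac{1}{n}\sum_i \tilde X_i \tilde X_i^\top - 2\rho \Sigma\|_{\rm op} \leq 2\|\Sigma\| R_c^2\sqrt{\log(2d/\nu)/n}$ with probability at least $1 - \nu$. The prescribed regularization $\reg = 4\|\Sigma\|R_c^2\sqrt{\log(2d/\nu)/n}$ is exactly twice this deviation, so on the good event $\tilde A \succcurlyeq 2\rho\Sigma$, and in particular $\tilde A - \rho \Sigma \succcurlyeq \rho \Sigma$, i.e., $\tilde A \succcurlyeq \rho C$ with $C = \Sigma$.

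Finally, on the good event each quadratic form featuring in Proposition~\ref{prop:ptr} satisfies $\tilde X_i^\top (\tilde A - \rho \Sigma)^{-1} \tilde X_i \leq \rho^{-1} \tilde X_i^\top \Sigma^{-1} \tilde X_i \leq R_c^2/\rho$ (the second inequality is just the ellipsoidal clipping), so $\Delta_+ \geq n \rho / R_c^2$; the assumption $n \geq -\log(\delta) R_c^2/(2\varepsilon)$ is designed so that $\Delta_+$ comfortably overwhelms the $-\log(\delta)/\varepsilon$ threshold after Laplace noise. To get $\simi^2$, I would apply Proposition~\ref{prop:clipping} with $C = \Sigma$: the ellipsoidal clip gives $\|\tilde X_i\|^2 \tilde X_i^\top \Sigma^{-2} \tilde X_i \leq \|\Sigma\|R_c^2 \cdot R_c^2/\lambda_{\min}(\Sigma) = \kappa_\Sigma R_c^4$, and substituting this into the conclusion of Proposition~\ref{prop:clipping} yields $\simi^2 = 6\kappa_\Sigma R_c^4/(\rho^2 n^2)$. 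The main obstacle I expect is the matrix concentration step: one has to verify that $n \geq 4\log(2d/\nu)/9$ really places us in the sub-Gaussian regime where the advertised $\sqrt{\log/n}$ deviation holds with the stated constant, and that ``always accepted'' is read as ``accepted with high probability over the Laplace noise'' rather than deterministic acceptance.
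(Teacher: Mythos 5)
Your three-step plan --- computing $\mathbb{E}[\tilde X_i\tilde X_i^\top]=2\rho\Sigma$ by whitening and rotational invariance, using matrix Bernstein together with the regularization $\reg$ to conclude $\tilde A\succcurlyeq 2\rho\Sigma$ with probability $1-\dproba$, and then feeding the per-point bound $\tilde X_i^\top(\tilde A-\rho\Sigma)^{-1}\tilde X_i\le R_c^2/\rho$ into Propositions~\ref{prop:ptr} and~\ref{prop:clipping} --- is exactly the paper's proof, which proposes $\rho=\bar\rho/2$ where $\bar\rho=2\rho$ is the proportionality constant of the clipped population covariance. The only caveats are ones the paper shares: the Bernstein deviation at confidence $1-\dproba$ is $4\norm{\Sigma}R_c^2\sqrt{\log(2d/\dproba)/n}$ rather than your factor $2$ (the conclusion $\tilde A\succcurlyeq 2\rho\Sigma$ still holds since $\reg$ equals that deviation), and the match between the stated condition $n\ge-\log(\delta)R_c^2/(2\varepsilon)$ and the acceptance threshold $-\log(\delta)/\varepsilon$ is loose by a constant involving the factor $\rho\le 1/2$ that you (correctly) retained and the paper drops, an imprecision already present in the proposition itself.
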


The proof can be found in Appendix~\ref{app:proof_clip_gaussian}. Note that we clip to enforce the stronger condition $\Xtilde_i^\top C^{-1} \Xtilde_i \leq R_c^2$, as this clipping preserves nice properties for Gaussian data. While both kinds of clipping are valid in practice, such a simple analytic expression for $\rho$ might be harder to obtain when clipping for $\|\Xtilde_i\|^2 \Xtilde_i^\top C^{-2} \Xtilde_i \leq R_c^4$. However, we pay a $\kappa_\Sigma$ factor in the relative sensitivity constant by doing so (but recover part of it since the proposed $\rho$ is independent of $\kappa_\Sigma$). 

\section{DISTRIBUTED TRAINING UNDER LOCAL DP}
\label{sec:experiments}

\begin{figure*} 
\centering
    \includegraphics*[width=0.32\linewidth]{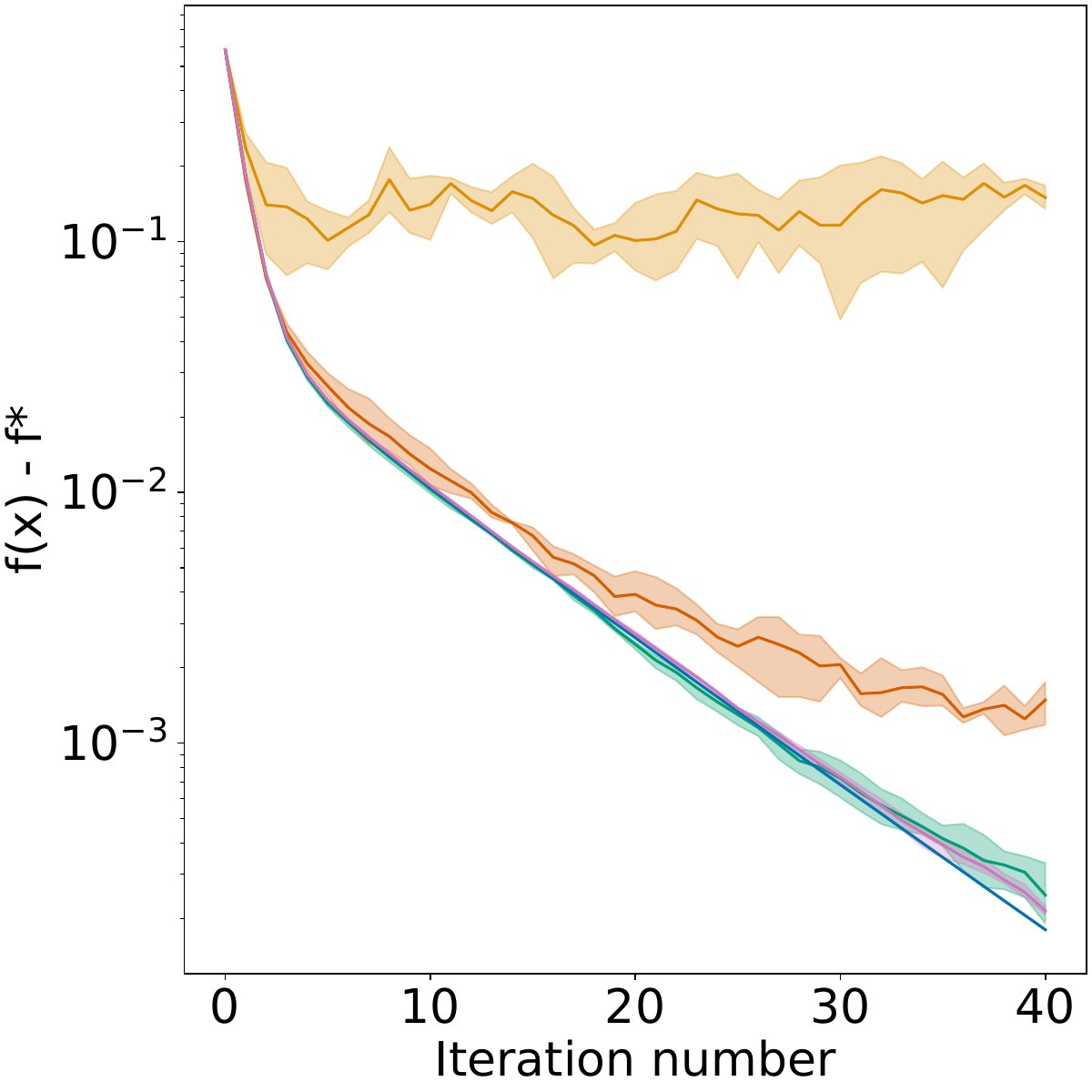}
    \includegraphics*[width=0.32\linewidth]{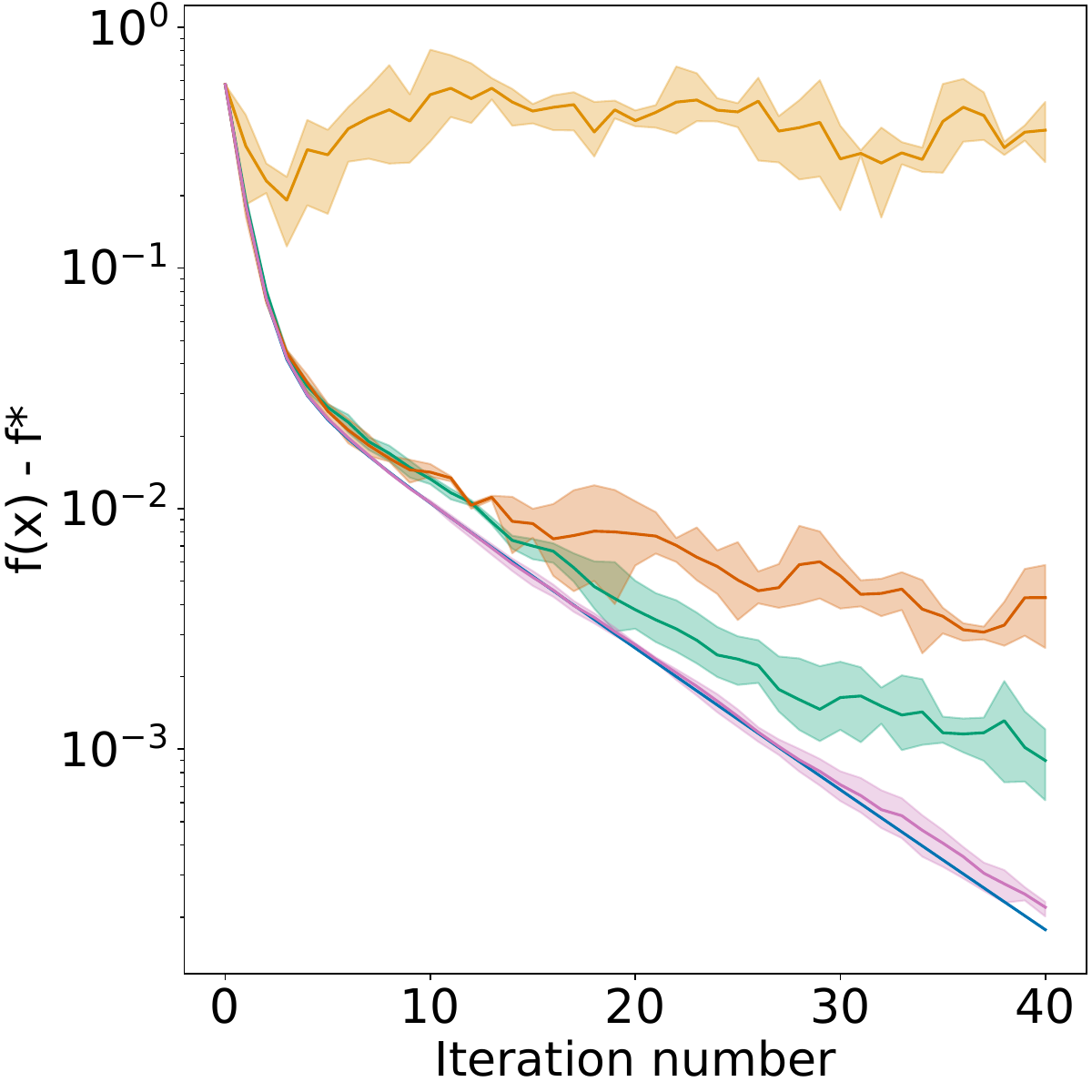}
    \includegraphics*[width=0.32\linewidth]{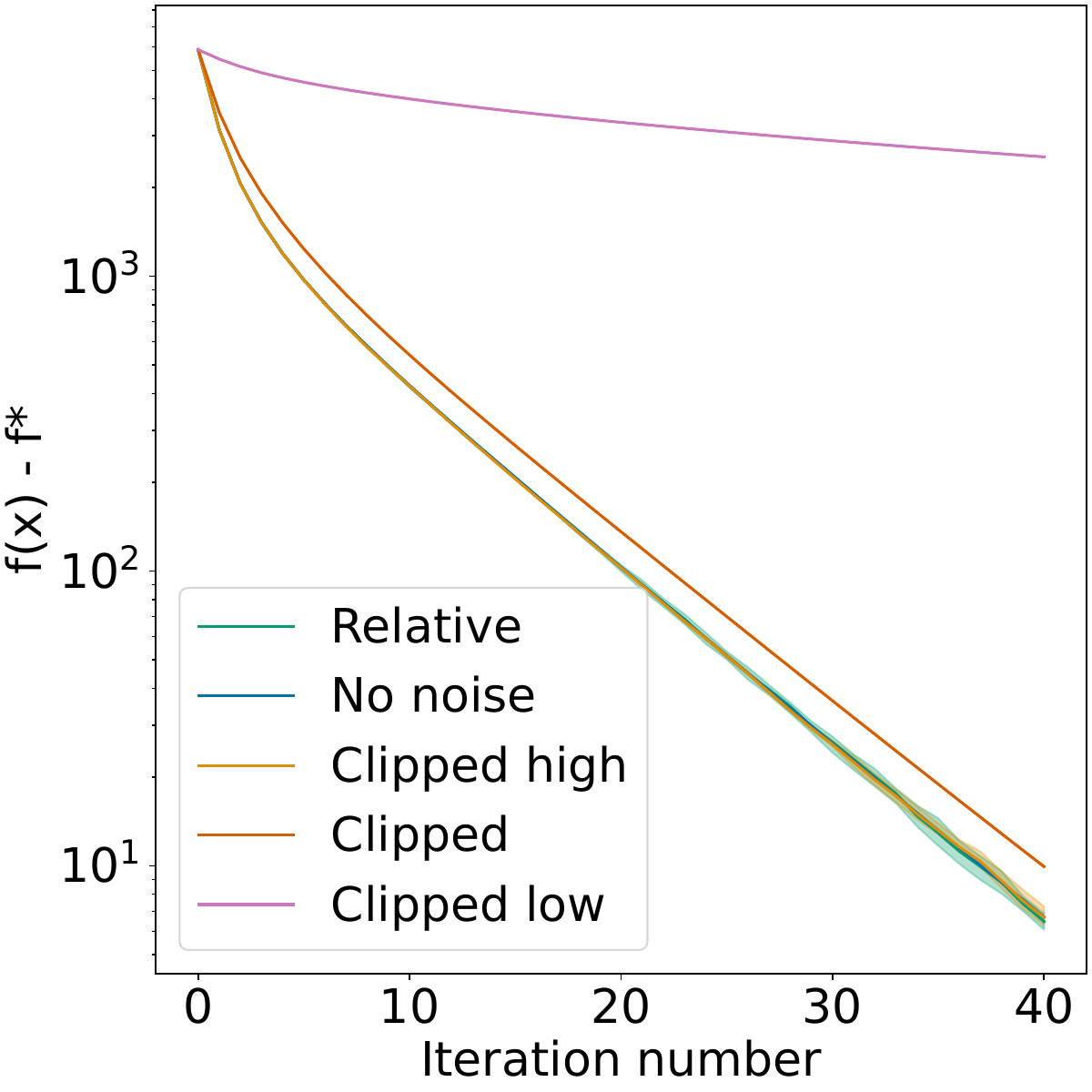}
    \caption{Utility of several private gradient descent algorithms with equivalent RDP guarantees. (Left): `Random', (Middle): `label', (Right): `bias'. Shaded areas are min/max values over 3 runs.}
    \label{fig:exp}
\end{figure*}

An interesting use-case for the Relative Gaussian Mechanism is distributed training in the local model of DP. Several nodes participate in a global training procedure, minimizing a shared objective. To this end, they periodically exchange (private) gradients, and the key bottleneck is thus \emph{communication}. Yet, privatizing gradients is a \emph{local} procedure, that each node completes on its own. In particular, it is reasonable to assume here that nodes can locally and efficiently estimate $(\simi, \Rrel)$ without having to solve the global optimization problem (which would require communication). 

We argue that, contrary to the $\RGM$, it is impossible to effectively use the $\GM$ without knowledge from other nodes. To illustrate this, consider the simple example where two nodes have respective objectives $f_1(\theta) = \alpha \sqnorm{\theta}$, and $f_2(\theta) = \beta \sqnorm{\theta - b}$. There, the sensitive records to keep private are $\alpha \in [\alpha_{\min}, \alpha_{\max}]$ and $\beta \in [\beta_{\min}, \beta_{\max}]$. Both nodes can easily compute their local parameters $\simi_1 = \alpha_{\min} / \alpha_{\max}$, $\simi_2 = \beta_{\min} / \beta_{\max}$, and $\Rrel = 0$. Then, they can directly use the $\RGM$. Consider now that nodes use $\GM$ with gradient clipping instead, and that for this particular instance, $\alpha = \beta = 1$. In order not to bias the objective, the clipping threshold for node $1$ would need to be set to a least $\norm{b}$, which is equal to the norm of the gradient of $f_1$ evaluated at the global optimum. However, node $1$ has no knowledge of $b$, and has thus no way of setting a relevant clipping threshold without exchanging information with node $2$. In this simple illustrative example, it would of course be enough to just exchange an approximation of $b$, which has a reasonable cost. Nonetheless, this highlights that setting a relevant clipping threshold in general requires knowledge of the solution to the global problem, which is generally unavailable as it depends on all nodes' data.

We illustrate this with linear regression experiments on the ijcnn1 dataset~\citep{chang2001ijcnn}, with $N=141691$, and $d=22$. We consider ridge linear regression, so $f$ is of the form of~\eqref{eq:quad_objective} where the $y_i$ correspond to the binary classification labels. We set the regularization parameter $\reg = 0.03$, and RDP parameters $\alpha = 2$ and $\varepsilon=0.1$. In order to avoid having to decide a clipping threshold for $\GM$, we automatically set the threshold as the maximum of the individual stochastic gradients at optimum (to avoid bias). We also run experiments with $c_{\rm high} = 10c$ (`Clip high') and $c_{\rm low} = c / 10$ (`Clip low'). We compare this to vanilla gradient descent without noise and $\RGM$, where we set $R_c$ just high enough so that no point is actually clipped, and take $C=\tilde{A}$ and $\rho = 1/2$ for PTR, so that the condition on $\Delta$ can be reduced to $\Delta = n/2R_c \approx 780$, leading to $\delta = \exp(-\varepsilon\Delta) \approx 10^{-34}$. The results are shown in Figure~\ref{fig:exp}. Code is available in supplementary material, and the precise experimental details can be found in Appendix~\ref{app:experiments}. Additional results on a different dataset can be found in Appendix~\ref{app:extra_exp}.

We study 3 different data splits: (i) `Random' (left plot): the data is split randomly across the two nodes. (ii) `label' heterogeneity (center): we sample 50 points at random for each node, and then all positive labels are assigned to one node and all negative to the other. (iii) `bias' (right): we add a bias $B$ to the objective to recreate (with more complex data) the simple example discussed above. 

We observe that, although there is generally always a clipping threshold that works well, this threshold is problem-dependent. Small thresholds work well for homogeneous objectives or with label heterogeneity, whereas larger clipping thresholds handle bias heterogeneity better. Therefore, the clipping threshold needs to be tuned, which requires more communication, and incurs additional privacy leaks.
On the contrary, the Relative Gaussian mechanism is, in this case, able to deal with heterogeneity without such tuning.

Note that the main contribution of the paper is to give a series of theoretical results that allow to properly instantiate smooth-sensitivity ideas, in particular for a gradient descent setting. The experiments are meant to illustrate the method and show that it can be competitive with clipping, not to show that the baseline RGM mechanism as presented in this paper gives state-of-the-art results for DP gradient descent. This would require to combine our approach with many additional tweaks (including subsampling), along with heavy tuning, and is out of the scope of this paper. Yet, our results pave the way for such applications by providing a framework that allows to evaluate relative sensitivity parameters by using PTR.

\section{CONCLUSION}
We introduced the relative L2 sensitivity, a generalization of the usual L2 sensitivity which depends on the norm of the query. We designed the Relative Gaussian mechanism ($\RGM$), a mechanism that exploits this sensitivity assumption to adapt the level of noise to the norm of the query $\cR(x)$, and proved tight privacy guarantees. We then applied $\RGM$ to private gradient descent and proposed a framework based on clipping features and PTR to enforce relative L2 sensitivity. An important research direction is to improve the mechanism to reduce the dimension dependence of the privacy guarantees. This might be achieved using relative sensitivity assumptions beyond Gaussian noise~\citep{bun2019average}.

\section*{Acknowledgements}

This work was supported by the Inria Exploratory Action
FLAMED and by the French National Research Agency
(ANR) through grant ANR-20-CE23-0015 (Project
PRIDE), ANR-20-CHIA-0001-01 (Chaire IA CaMeLOt)
and ANR 22-PECY-0002 IPOP (Interdisciplinary Project
on Privacy) project of the Cybersecurity PEPR. We thank Thomas Steinke and anonymous reviewers of a previous version of this paper for suggesting the use of the PTR mechanism to test that relative sensitivity holds.

\bibliographystyle{plainnat}
\bibliography{biblio}

\begin{thebibliography}{30}
\providecommand{\natexlab}[1]{#1}
\providecommand{\url}[1]{\texttt{#1}}
\expandafter\ifx\csname urlstyle\endcsname\relax
  \providecommand{\doi}[1]{doi: #1}\else
  \providecommand{\doi}{doi: \begingroup \urlstyle{rm}\Url}\fi

\bibitem[Abadi et~al.(2016{\natexlab{a}})Abadi, Agarwal, Barham, Brevdo, Chen,
  Citro, Corrado, Davis, Dean, Devin, et~al.]{abadi2016tensorflow}
Mart{\'\i}n Abadi, Ashish Agarwal, Paul Barham, Eugene Brevdo, Zhifeng Chen,
  Craig Citro, Greg~S Corrado, Andy Davis, Jeffrey Dean, Matthieu Devin, et~al.
\newblock Tensorflow: Large-scale machine learning on heterogeneous distributed
  systems.
\newblock \emph{arXiv preprint arXiv:1603.04467}, 2016{\natexlab{a}}.

\bibitem[Abadi et~al.(2016{\natexlab{b}})Abadi, Chu, Goodfellow, McMahan,
  Mironov, Talwar, and Zhang]{abadi2016Deep}
Martin Abadi, Andy Chu, Ian Goodfellow, H.~Brendan McMahan, Ilya Mironov, Kunal
  Talwar, and Li~Zhang.
\newblock Deep {Learning} with {Differential} {Privacy}.
\newblock In \emph{Proceedings of the 2016 {ACM} {SIGSAC} {Conference} on
  {Computer} and {Communications} {Security}}, {CCS} '16, pages 308--318, New
  York, NY, USA, October 2016{\natexlab{b}}. Association for Computing
  Machinery.
\newblock ISBN 978-1-4503-4139-4.
\newblock \doi{10.1145/2976749.2978318}.
\newblock URL \url{https://doi.org/10.1145/2976749.2978318}.

\bibitem[Amin et~al.(2019)Amin, Kulesza, Munoz, and
  Vassilvtiskii]{amin2019Bounding}
Kareem Amin, Alex Kulesza, Andres Munoz, and Sergei Vassilvtiskii.
\newblock Bounding {User} {Contributions}: {A} {Bias}-{Variance} {Trade}-off in
  {Differential} {Privacy}.
\newblock In \emph{Proceedings of the 36th {International} {Conference} on
  {Machine} {Learning}}, pages 263--271. PMLR, May 2019.
\newblock URL \url{https://proceedings.mlr.press/v97/amin19a.html}.
\newblock ISSN: 2640-3498.

\bibitem[Andrew et~al.(2021)Andrew, Thakkar, McMahan, and
  Ramaswamy]{andrew2021Differentially}
Galen Andrew, Om~Thakkar, Brendan McMahan, and Swaroop Ramaswamy.
\newblock Differentially {Private} {Learning} with {Adaptive} {Clipping}.
\newblock In \emph{Advances in {Neural} {Information} {Processing} {Systems}},
  volume~34, pages 17455--17466. Curran Associates, Inc., 2021.
\newblock URL
  \url{https://proceedings.neurips.cc/paper/2021/hash/91cff01af640a24e7f9f7a5ab407889f-Abstract.html}.

\bibitem[Asi et~al.(2022)Asi, Chadha, Cheng, and Duchi]{asi2022Private}
Hilal Asi, Karan Chadha, Gary Cheng, and John Duchi.
\newblock Private optimization in the interpolation regime: faster rates and
  hardness results.
\newblock In \emph{Proceedings of the 39th {International} {Conference} on
  {Machine} {Learning}}, pages 1025--1045. PMLR, June 2022.
\newblock URL \url{https://proceedings.mlr.press/v162/asi22a.html}.

\bibitem[Bassily et~al.(2014)Bassily, Smith, and Thakurta]{bassily2014Private}
Raef Bassily, Adam Smith, and Abhradeep Thakurta.
\newblock Private {Empirical} {Risk} {Minimization}: {Efficient} {Algorithms}
  and {Tight} {Error} {Bounds}.
\newblock In \emph{2014 {IEEE} 55th {Annual} {Symposium} on {Foundations} of
  {Computer} {Science}}, pages 464--473. IEEE, October 2014.
\newblock ISBN 978-1-4799-6517-5.
\newblock \doi{10.1109/FOCS.2014.56}.
\newblock URL
  \url{http://ieeexplore.ieee.org/lpdocs/epic03/wrapper.htm?arnumber=6979031}.

\bibitem[Bassily et~al.(2019)Bassily, Feldman, Talwar, and
  Guha~Thakurta]{bassily2019Private}
Raef Bassily, Vitaly Feldman, Kunal Talwar, and Abhradeep Guha~Thakurta.
\newblock Private {Stochastic} {Convex} {Optimization} with {Optimal} {Rates}.
\newblock In \emph{Advances in {Neural} {Information} {Processing} {Systems}},
  volume~32. Curran Associates, Inc., 2019.
\newblock URL
  \url{https://proceedings.neurips.cc/paper/2019/hash/3bd8fdb090f1f5eb66a00c84dbc5ad51-Abstract.html}.

\bibitem[Brunel and Avella-Medina(2020)]{brunel2020propose}
Victor-Emmanuel Brunel and Marco Avella-Medina.
\newblock Propose, test, release: Differentially private estimation with high
  probability.
\newblock \emph{arXiv preprint arXiv:2002.08774}, 2020.

\bibitem[Bun and Steinke(2019)]{bun2019average}
Mark Bun and Thomas Steinke.
\newblock Average-case averages: Private algorithms for smooth sensitivity and
  mean estimation.
\newblock \emph{Advances in Neural Information Processing Systems}, 32, 2019.

\bibitem[Bun et~al.(2018)Bun, Dwork, Rothblum, and Steinke]{bun2018composable}
Mark Bun, Cynthia Dwork, Guy~N Rothblum, and Thomas Steinke.
\newblock Composable and versatile privacy via truncated cdp.
\newblock In \emph{Proceedings of the 50th Annual ACM SIGACT Symposium on
  Theory of Computing}, pages 74--86, 2018.

\bibitem[Chang and Lin(2001)]{chang2001ijcnn}
Chih-Chung Chang and Chih-Jen Lin.
\newblock Ijcnn 2001 challenge: Generalization ability and text decoding.
\newblock In \emph{IJCNN'01. International Joint Conference on Neural Networks.
  Proceedings (Cat. No. 01CH37222)}, volume~2, pages 1031--1036. IEEE, 2001.

\bibitem[Chen et~al.(2020)Chen, Wu, and Hong]{chen2020Understanding}
Xiangyi Chen, Zhiwei~Steven Wu, and Mingyi Hong.
\newblock Understanding {Gradient} {Clipping} in {Private} {SGD}: {A}
  {Geometric} {Perspective}.
\newblock \emph{arXiv:2006.15429 [cs, math, stat]}, June 2020.
\newblock URL \url{http://arxiv.org/abs/2006.15429}.

\bibitem[Dwork(2006)]{dwork2006differential}
Cynthia Dwork.
\newblock Differential privacy.
\newblock In \emph{Automata, Languages and Programming: 33rd International
  Colloquium, ICALP 2006, Venice, Italy, July 10-14, 2006, Proceedings, Part II
  33}, pages 1--12. Springer, 2006.

\bibitem[Dwork and Lei(2009)]{ptr}
Cynthia Dwork and Jing Lei.
\newblock Differential privacy and robust statistics.
\newblock In \emph{STOC}, 2009.

\bibitem[Dwork et~al.(2006)Dwork, McSherry, Nissim, and
  Smith]{dwork2006calibrating}
Cynthia Dwork, Frank McSherry, Kobbi Nissim, and Adam Smith.
\newblock Calibrating noise to sensitivity in private data analysis.
\newblock In \emph{Theory of Cryptography: Third Theory of Cryptography
  Conference, TCC 2006, New York, NY, USA, March 4-7, 2006. Proceedings 3},
  pages 265--284. Springer, 2006.

\bibitem[Dwork et~al.(2014)Dwork, Roth, et~al.]{dwork2014algorithmic}
Cynthia Dwork, Aaron Roth, et~al.
\newblock The algorithmic foundations of differential privacy.
\newblock \emph{Foundations and Trends{\textregistered} in Theoretical Computer
  Science}, 9\penalty0 (3--4):\penalty0 211--407, 2014.

\bibitem[Feldman et~al.(2020)Feldman, Koren, and Talwar]{feldman2020Private}
Vitaly Feldman, Tomer Koren, and Kunal Talwar.
\newblock Private stochastic convex optimization: optimal rates in linear time.
\newblock In \emph{Proceedings of the 52nd Annual ACM SIGACT Symposium on
  Theory of Computing}, pages 439--449, 2020.

\bibitem[Koloskova et~al.(2023)Koloskova, Hendrikx, and
  Stich]{koloskova2023Revisiting}
Anastasia Koloskova, Hadrien Hendrikx, and Sebastian~U. Stich.
\newblock Revisiting {Gradient} {Clipping}: {Stochastic} bias and tight
  convergence guarantees, May 2023.
\newblock URL \url{http://arxiv.org/abs/2305.01588}.

\bibitem[Mironov(2017)]{mironov2017renyi}
Ilya Mironov.
\newblock R{\'e}nyi differential privacy.
\newblock In \emph{2017 IEEE 30th computer security foundations symposium
  (CSF)}, pages 263--275. IEEE, 2017.

\bibitem[Nesterov et~al.(2018)]{nesterov2018lectures}
Yurii Nesterov et~al.
\newblock \emph{Lectures on convex optimization}, volume 137.
\newblock Springer, 2018.

\bibitem[Nissim et~al.(2007)Nissim, Raskhodnikova, and Smith]{smooth_sens}
Kobbi Nissim, Sofya Raskhodnikova, and Adam Smith.
\newblock Smooth sensitivity and sampling in private data analysis.
\newblock In \emph{STOC}, 2007.

\bibitem[Pichapati et~al.(2019)Pichapati, Suresh, Yu, Reddi, and
  Kumar]{pichapati2019AdaCliP}
Venkatadheeraj Pichapati, Ananda~Theertha Suresh, Felix~X. Yu, Sashank~J.
  Reddi, and Sanjiv Kumar.
\newblock {AdaCliP}: {Adaptive} {Clipping} for {Private} {SGD}.
\newblock \emph{arXiv:1908.07643 [cs, stat]}, October 2019.
\newblock URL \url{http://arxiv.org/abs/1908.07643}.

\bibitem[Song et~al.(2013)Song, Chaudhuri, and Sarwate]{song2013Stochastic}
Shuang Song, Kamalika Chaudhuri, and Anand~D. Sarwate.
\newblock Stochastic gradient descent with differentially private updates.
\newblock In \emph{2013 {IEEE} {Global} {Conference} on {Signal} and
  {Information} {Processing}}, pages 245--248, Austin, TX, USA, December 2013.
  IEEE.

\bibitem[Tropp et~al.(2015)]{tropp2015introduction}
Joel~A Tropp et~al.
\newblock An introduction to matrix concentration inequalities.
\newblock \emph{Foundations and Trends{\textregistered} in Machine Learning},
  8\penalty0 (1-2):\penalty0 1--230, 2015.

\bibitem[Tsfadia et~al.(2022)Tsfadia, Cohen, Kaplan, Mansour, and
  Stemmer]{tsfadia2022friendlycore}
Eliad Tsfadia, Edith Cohen, Haim Kaplan, Yishay Mansour, and Uri Stemmer.
\newblock Friendlycore: Practical differentially private aggregation.
\newblock In \emph{International Conference on Machine Learning}, pages
  21828--21863. PMLR, 2022.

\bibitem[Vadhan(2017)]{vadhan2017complexity}
Salil Vadhan.
\newblock The complexity of differential privacy.
\newblock \emph{Tutorials on the Foundations of Cryptography: Dedicated to Oded
  Goldreich}, pages 347--450, 2017.

\bibitem[Wang et~al.(2017)Wang, Ye, and Xu]{wang2017Differentially}
Di~Wang, Minwei Ye, and Jinhui Xu.
\newblock Differentially private empirical risk minimization revisited:
  {Faster} and more general.
\newblock In I.~Guyon, U.~V. Luxburg, S.~Bengio, H.~Wallach, R.~Fergus,
  S.~Vishwanathan, and R.~Garnett, editors, \emph{Advances in neural
  information processing systems}, volume~30. Curran Associates, Inc., 2017.
\newblock URL
  \url{https://proceedings.neurips.cc/paper/2017/file/f337d999d9ad116a7b4f3d409fcc6480-Paper.pdf}.

\bibitem[Wang(2018)]{DBLP:conf/uai/Wang18}
Yu{-}Xiang Wang.
\newblock Revisiting differentially private linear regression: optimal and
  adaptive prediction {\&} estimation in unbounded domain.
\newblock In Amir Globerson and Ricardo Silva, editors, \emph{Proceedings of
  the Thirty-Fourth Conference on Uncertainty in Artificial Intelligence, {UAI}
  2018, Monterey, California, USA, August 6-10, 2018}, pages 93--103. {AUAI}
  Press, 2018.
\newblock URL \url{http://auai.org/uai2018/proceedings/papers/40.pdf}.

\bibitem[Yang et~al.(2022)Yang, Zhang, Chen, and Liu]{yang2022Normalized}
Xiaodong Yang, Huishuai Zhang, Wei Chen, and Tie-Yan Liu.
\newblock Normalized/{Clipped} {SGD} with {Perturbation} for {Differentially}
  {Private} {Non}-{Convex} {Optimization}, June 2022.
\newblock URL \url{http://arxiv.org/abs/2206.13033}.

\bibitem[Yousefpour et~al.(2021)Yousefpour, Shilov, Sablayrolles, Testuggine,
  Prasad, Malek, Nguyen, Ghosh, Bharadwaj, Zhao, et~al.]{yousefpour2021opacus}
Ashkan Yousefpour, Igor Shilov, Alexandre Sablayrolles, Davide Testuggine,
  Karthik Prasad, Mani Malek, John Nguyen, Sayan Ghosh, Akash Bharadwaj,
  Jessica Zhao, et~al.
\newblock Opacus: User-friendly differential privacy library in pytorch.
\newblock \emph{arXiv preprint arXiv:2109.12298}, 2021.

\end{thebibliography}

\newpage
\onecolumn

\appendix

\begin{center}
    {\Large\bfseries Appendix}
\end{center}

The appendix is organized as follows. Section~\ref{app:proofs} contains the full proofs for the general $\RGM$, and in particular Theorem~\ref{thm:renyi-sgm}. Section~\ref{app:gradients_quadratics} contains the proofs for the results that justify using relative assumptions when minimizing quadratics, and Section~\ref{app:experiments} contains the detailed experimental setting, with all the elements needed to reproduce the experiments. The code itself can be found in supplementary material.

\section{Proofs for the Relative Gaussian Mechanism}
\label{app:proofs}

\subsection{Bounding the noise scale ratio}
We start by the following simple lemma, that will allow us to bound the domain of admissible $\alpha$.

\begin{lemma}\label{lemma:ratio}
        If $\sigma^2 \geq \frac{\gamma(1 + \simi^{-1})}{ 2\simi + \simi^2} \Rrel^2$, then $(1 + \eta)^{-2} \leq \frac{\gamma \sqnorm{\rx} + \sigma^2}{\gamma \sqnorm{\ry} + \sigma^2} \leq (1 + \simi)^2$.
\end{lemma}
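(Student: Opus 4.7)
The plan is to reduce everything to comparing $\|\cR(y)\|^2$ with $\|\cR(x)\|^2$, since the $\gamma\|\cdot\|^2+\sigma^2$ expressions are monotone. Because the relative sensitivity assumption is symmetric (as noted after Definition~\ref{def:generalized_L2}), once I bound one ratio by $(1+\simi)^2$, the other direction follows by swapping $x$ and $y$. So it suffices to establish $\gamma\|\cR(y)\|^2+\sigma^2 \leq (1+\simi)^2(\gamma\|\cR(x)\|^2+\sigma^2)$.

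The first step is a Young-type inequality: for any $a>0$,
\[
\|\cR(y)\|^2 \leq (1+a)\|\cR(x)\|^2 + (1+a^{-1})\|\cR(x)-\cR(y)\|^2,
\]
followed by applying the relative sensitivity bound $\|\cR(x)-\cR(y)\|^2\leq \simi^2\|\cR(x)\|^2+\Rrel^2$. The next step is to choose $a$ to match the $(1+\simi)^2$ target: taking $a=\simi$ gives $(1+\simi)+(1+\simi^{-1})\simi^2 = 1+2\simi+\simi^2 = (1+\simi)^2$ as the coefficient of $\|\cR(x)\|^2$, and a residual constant of $(1+\simi^{-1})\Rrel^2 = \tfrac{1+\simi}{\simi}\Rrel^2$. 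Multiplying by $\gamma$ and adding $\sigma^2$, the target inequality reduces to
\[
\gamma\,\tfrac{1+\simi}{\simi}\Rrel^2 + \sigma^2 \leq (1+\simi)^2\,\sigma^2,
\]
which, after rearranging and using $(1+\simi)^2 - 1 = 2\simi+\simi^2$, is exactly the hypothesis $\sigma^2 \geq \tfrac{\gamma(1+\simi^{-1})}{2\simi+\simi^2}\Rrel^2$.

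For the lower bound, I would invoke symmetry of the relative sensitivity assumption: since $\|\cR(x)-\cR(y)\|^2 \leq \simi^2\|\cR(y)\|^2 + \Rrel^2$ also holds (the neighboring relation being symmetric), the same derivation with the roles of $x$ and $y$ swapped yields $\gamma\|\cR(x)\|^2+\sigma^2 \leq (1+\simi)^2(\gamma\|\cR(y)\|^2+\sigma^2)$, which inverts to $(1+\simi)^{-2} \leq \frac{\gamma\|\cR(x)\|^2+\sigma^2}{\gamma\|\cR(y)\|^2+\sigma^2}$, completing the proof.

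There is no real obstacle here; the only non-routine step is guessing the correct Young parameter $a=\simi$, which is reverse-engineered so that the coefficient in front of $\|\cR(x)\|^2$ matches $(1+\simi)^2$ exactly. Everything else is algebraic bookkeeping, and the symmetry observation from Definition~\ref{def:generalized_L2} does the work on the lower side.
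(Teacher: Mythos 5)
Your proof is correct and follows essentially the same route as the paper's: Young's inequality with parameter $a=\simi$ applied to the decomposition of one output around the other, the relative sensitivity bound, and the hypothesis on $\sigma^2$ used to absorb the residual $\gamma(1+\simi^{-1})\Rrel^2$ into $\left((1+\simi)^2-1\right)\sigma^2$, with the opposite direction obtained by swapping $x$ and $y$. One cosmetic slip in your last paragraph: the swapped inequality $\gamma\sqnorm{\rx}+\sigma^2 \le (1+\simi)^2\left(\gamma\sqnorm{\ry}+\sigma^2\right)$ \emph{is} the upper bound of the lemma directly, while it is your first inequality that inverts to give the lower bound --- but since you derive both inequalities, nothing is missing.
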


\begin{proof}
    \begin{align*}
        \frac{\gamma \sqnorm{\rx} + \sigma^2}{\gamma \sqnorm{\ry} + \sigma^2} &= \frac{\gamma \sqnorm{\ry + \rx - \ry} + \sigma^2}{\gamma \sqnorm{\ry} + \sigma^2}\\
        &\leq \frac{\gamma (1 + \simi)\sqnorm{\ry} + \gamma (1 + \simi^{-1}) \sqnorm{\rx - \ry} + \sigma^2}{\gamma \sqnorm{\ry} + \sigma^2}\\
        &\leq \frac{\gamma (1 + \simi)\sqnorm{\ry} + \gamma (1 + \simi^{-1}) (\simi^2 \sqnorm{\ry} + \Rrel^2) + \sigma^2}{\gamma \sqnorm{\ry} + \sigma^2}.
    \end{align*}
    The result then follows from using the bound on $\sigma^2$ to factor $\gamma \sqnorm{\ry} + \sigma^2$ in the numerator. The other side (lower bound) is obtained by inverting $\rx$ and $\ry$.
\end{proof}

\subsection{Rényi divergence of two Gaussians}
\label{sec:renyi-divergence-two}

Recall that for $\alpha > 1$ and two distributions, $P$ and $Q$, the Rényi divergence is
\begin{align*}
  \cD_\alpha(P||Q)
  & = \frac{1}{\alpha - 1}
    \log \int \frac{P(x)^\alpha}{Q(x)^\alpha} dQ(x)
    = \frac{1}{\alpha - 1}
    \log \int \frac{P(x)^\alpha}{Q(x)^{\alpha-1}} dx
    \enspace.
\end{align*}

\begin{lemma} \label{lemma:two_gaussians}
Let $P$ and $Q$ be Gaussian distributions of dimension $d$ centered in $\mu_1$ and $\mu_2$ with variance $\sigma_1^2 \Id$ and $\sigma_2^2 \Id$. Then, assuming that $\alpha\sigma_2^2 + (1-\alpha)\sigma_1^2 > 0$, 
    \begin{equation}
    \label{eq:lemma2}
    \cD_\alpha(P||Q) = \frac{\alpha\norm{\mu_1-\mu_2}^2}{2(\alpha \sigma_2^2 + (1-\alpha)\sigma_1^2)} + \frac{d}{\alpha - 1}
    \log
    \left(\frac{\sigma_1^{1-\alpha}\sigma_2^\alpha}
    {\sqrt{\alpha \sigma_2^2 + (1 - \alpha) \sigma_1^2}}
    \right)
  \enspace.
    \end{equation}
\end{lemma}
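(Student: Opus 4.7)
The plan is to apply the definition of Rényi divergence directly and reduce to a routine integral calculation. First, I would observe that since both covariances are multiples of the identity, $P$ and $Q$ factor as products of independent one-dimensional Gaussians along each coordinate, so
\begin{equation*}
    \int \frac{P(x)^\alpha}{Q(x)^{\alpha-1}}\,dx
    = \prod_{i=1}^d \int \frac{P_i(x_i)^\alpha}{Q_i(x_i)^{\alpha-1}}\,dx_i,
\end{equation*}
where $P_i$ and $Q_i$ are the $i$-th marginals. This reduces the problem to $d$ identical one-dimensional integrals (identical up to replacing $\mu_1,\mu_2$ by their $i$-th entries), and taking the logarithm will turn the product into a sum that yields the claimed factor of $d$.

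In one dimension, the integrand $P_i^\alpha Q_i^{1-\alpha}$ equals $(2\pi\sigma_1^2)^{-\alpha/2}(2\pi\sigma_2^2)^{-(1-\alpha)/2}$ times the exponential of the quadratic
\begin{equation*}
    -\frac{\alpha(x-\mu_{1,i})^2}{2\sigma_1^2} - \frac{(1-\alpha)(x-\mu_{2,i})^2}{2\sigma_2^2}.
\end{equation*}
The coefficient of $-x^2/2$ is $a = [\alpha\sigma_2^2 + (1-\alpha)\sigma_1^2]/(\sigma_1^2\sigma_2^2)$, which is strictly positive by the hypothesis $\alpha\sigma_2^2 + (1-\alpha)\sigma_1^2 > 0$, so the Gaussian integral converges. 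Completing the square in $x$ and applying $\int e^{-ax^2/2 + bx}\,dx = \sqrt{2\pi/a}\,e^{b^2/(2a)}$ yields a closed-form value of the integral.

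The remaining work is purely algebraic bookkeeping: I would expand $b^2/(2a)$ and combine it with the $-\tfrac{1}{2}[\alpha\mu_{1,i}^2/\sigma_1^2 + (1-\alpha)\mu_{2,i}^2/\sigma_2^2]$ constant from the exponent. After cancellation of the $\mu_{1,i}^2$ and $\mu_{2,i}^2$ terms, the cross contribution simplifies to $\alpha(1-\alpha)(\mu_{1,i}-\mu_{2,i})^2/[2(\alpha\sigma_2^2+(1-\alpha)\sigma_1^2)]$. Multiplying the normalizing constants $(2\pi\sigma_1^2)^{-\alpha/2}(2\pi\sigma_2^2)^{-(1-\alpha)/2}$ by $\sqrt{2\pi/a}$ gives exactly $\sigma_1^{1-\alpha}\sigma_2^\alpha/\sqrt{\alpha\sigma_2^2+(1-\alpha)\sigma_1^2}$ per coordinate.

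Finally, I would take the product over $i = 1,\dots,d$ (which turns the sum of squared differences into $\|\mu_1-\mu_2\|^2$ and raises the $\sigma$-dependent factor to the $d$-th power), apply $\tfrac{1}{\alpha-1}\log(\cdot)$, and note that the factor $\alpha(1-\alpha)/(\alpha-1) = -\alpha$ flips sign so that dividing the mean-difference term by $(\alpha-1)$ produces $+\alpha\|\mu_1-\mu_2\|^2/[2(\alpha\sigma_2^2+(1-\alpha)\sigma_1^2)]$ after handling the sign via $\log$ being applied after $\exp$ of the positive quantity. The only real obstacle is keeping the signs of the $(1-\alpha)$ factor and the final $1/(\alpha-1)$ division straight when rearranging the completed-square expression, but once that is done the stated identity follows directly.
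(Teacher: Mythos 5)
Your proposal is correct and follows essentially the same route as the paper's proof: factor the integral into one-dimensional Gaussian integrals, complete the square using the positivity of $\alpha\sigma_2^2 + (1-\alpha)\sigma_1^2$, apply the standard Gaussian integral formula, and recombine across coordinates before taking $\tfrac{1}{\alpha-1}\log(\cdot)$. The sign bookkeeping you describe (the $\alpha(1-\alpha)$ factor becoming $+\alpha$ after division by $\alpha-1$) matches the paper's computation exactly.
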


Remark that when $\sigma_1=\sigma_2$, we recover the divergence of the standard Gaussian mechanism.

\begin{proof}
\begin{align*}
  \cD_\alpha(P||Q)
  & = \frac{1}{\alpha - 1}
    \log \sqrt{\frac{\sigma_2^{2d(\alpha-1)}}{(2\pi)^d \sigma_1^{2d\alpha}}}
    \int \exp\left(
    - \frac{\alpha \norm{u - \mu_1}^2}{2 \sigma_1^2}
    - \frac{(1-\alpha) \norm{u - \mu_2}^2}{2 \sigma_2^2}
    \right)
    du
    \enspace.
\end{align*}
We first compute the one-dimensional integral
\begin{align*}
  & \int_{-\infty}^{+\infty}
    \exp\left(
    -\frac{\alpha (u - \mu_1)^2}{2\sigma_1^2}
    -\frac{(1-\alpha) (u - \mu_2)^2}{2\sigma_2^2}
    \right) \\
  & = \int_{-\infty}^{+\infty}
    \exp\left(
    - \left(\frac{\alpha}{2\sigma_1^2} + \frac{1-\alpha}{2\sigma_2^2}\right) u^2
    + \left(\frac{\alpha\mu_1}{\sigma_1^2} + \frac{(1-\alpha)\mu_2}{\sigma_2^2}\right) u
    - \frac{\alpha\mu_1^2}{2\sigma_1^2} - \frac{(1-\alpha)\mu_2^2}{2\sigma_2^2}
    \right) \\
  & = \int_{-\infty}^{+\infty}
    \exp\left(
    - \left(\frac{\alpha \sigma_2^2 + (1-\alpha)\sigma_1^2}{2\sigma_1^2\sigma_2^2}\right) u^2
    + \left(\frac{\alpha\mu_1\sigma_2^2 + (1-\alpha)\mu_2\sigma_1^2}{\sigma_1^2\sigma_2^2}\right) u
    - \frac{\alpha\mu_1^2\sigma_2^2 + (1-\alpha)\mu_2^2\sigma_1^2}{2\sigma_1^2\sigma_2^2}
    \right)
    \enspace.
\end{align*}
Now, since
$\displaystyle \int_{-\infty}^{+\infty} \exp(-(au^2+bu+c))du =
\sqrt{\frac{\pi}{a}} \exp\Big(\frac{b^2}{4a} - c\Big)$, we have after
simplification, \textbf{and assuming
  $\alpha\sigma_2^2 + (1-\alpha)\sigma_1^2 > 0$},
\begin{align*}
  \int_{-\infty}^{+\infty}
    \exp\left(
    -\frac{\alpha (u - \mu_1)^2}{2\sigma_1^2}
    -\frac{(1-\alpha) (u - \mu_2)^2}{2\sigma_2^2}
    \right)
  & = \sqrt{
    \frac
    {2\pi\sigma_1^2\sigma_2^2}
    {\alpha \sigma_2^2 + (1 - \alpha) \sigma_1^2}
    }
    \exp\left(
    - \frac{\alpha(1-\alpha)(\mu_1-\mu_2)^2}{2(\alpha \sigma_2^2 + (1-\alpha)\sigma_1^2}
    \right) \\
  & =
    \frac
    {\sqrt{2\pi} \sigma_1\sigma_2}
    {\sqrt{\alpha \sigma_2^2 + (1 - \alpha) \sigma_1^2}}
    \exp\left(
    - \frac{\alpha(1-\alpha)(\mu_1-\mu_2)^2}{2(\alpha \sigma_2^2 + (1-\alpha)\sigma_1^2}
    \right)
  \enspace.
\end{align*}
Back to our divergence, we use the $1$-dimensional computations on each dimensions and obtain
\begin{align*}
  \cD_\alpha(P||Q)
  & = \frac{1}{\alpha - 1}
    \log \left(
    \sqrt{\frac{\sigma_2^{2d(\alpha-1)}}{(2\pi)^d \sigma_1^{2d\alpha}}}
    \prod_{j=1}^d \frac
    {\sqrt{2\pi} \sigma_1\sigma_2}
    {\sqrt{\alpha \sigma_2^2 + (1 - \alpha) \sigma_1^2}}
    \exp\left(
    - \frac{\alpha(1-\alpha)(\mu_{1,j}-\mu_{2,j})^2}{2(\alpha \sigma_2^2 + (1-\alpha)\sigma_1^2)}
    \right) \right) \\
  & = \frac{1}{\alpha - 1}
    \log \left(
    \prod_{j=1}^d \frac{\sigma_1^{1-\alpha}\sigma_2^\alpha}
    {\sqrt{\alpha \sigma_2^2 + (1 - \alpha) \sigma_1^2}}
    \exp\left(
    - \frac{\alpha(1-\alpha)(\mu_{1,j}-\mu_{2,j})^2}{2(\alpha \sigma_2^2 + (1-\alpha)\sigma_1^2)}
    \right) \right) \\
  & = \frac{1}{\alpha - 1}
    \log \left(
    \left(\frac{\sigma_1^{1-\alpha}\sigma_2^\alpha}
    {\sqrt{\alpha \sigma_2^2 + (1 - \alpha) \sigma_1^2}}
    \right)^d
    \exp\left(
    - \frac{\alpha(1-\alpha)\norm{\mu_1-\mu_2}^2}{2(\alpha \sigma_2^2 + (1-\alpha)\sigma_1^2)}
    \right) \right)\\
  & = \frac{d}{\alpha - 1}
    \log
    \left(\frac{\sigma_1^{1-\alpha}\sigma_2^\alpha}
    {\sqrt{\alpha \sigma_2^2 + (1 - \alpha) \sigma_1^2}}
    \right)
    + \frac{\alpha\norm{\mu_1-\mu_2}^2}{2(\alpha \sigma_2^2 + (1-\alpha)\sigma_1^2)}
  \enspace.
\end{align*}
\end{proof} 

\subsection{Privacy guarantees (Theorem~\ref{thm:renyi-sgm})}
We would now like to apply Lemma~\ref{lemma:two_gaussians} where $\mu_1 = \rx$, $\sigma_1^2 = \gamma \sqnorm{\rx} + \sigma^2$, $\mu_2 = \ry$ and $\sigma_2^2 = \gamma \sqnorm{\ry} + \sigma^2$. 

\textbf{Verifying the condition on $\alpha$.} To this end, we first need to verify that $\alpha \sigma_2^2 + (1 - \alpha) \sigma_1^2 > 0$, or equivalently that $\sigma_2^2 / \sigma_1^2 \geq 1 - \alpha^{-1}$. If applicable, Lemma~\ref{lemma:ratio} would directly give us that this is true as long as $(1 + \simi)^{-2} \geq 1 - \alpha^{-1}$, which leads to the bound: 
\begin{equation}
    \alpha^{-1} \geq 1 - (1 + \simi)^{-2} = \frac{\simi(2 + \simi)}{(1 + \simi)^2},
\end{equation}
so in the end: 
\begin{equation}
    \alpha \leq \frac{(1 + \simi)^2}{\simi(2 + \simi)}.
\end{equation}
This is equivalent to $\alpha - 1 \leq \frac{1}{\simi(2 + \simi)}$, which is the condition from Theorem~\ref{thm:renyi-sgm}. In order to apply Lemma~\ref{lemma:ratio}, we need to verify that 
\begin{equation}
    \sigma^2 \geq \gamma \frac{1 + \simi^{-1}}{2\simi + \simi^2} \Rrel^2 = \frac{\gamma \Rrel^2}{\simi^2}\frac{1 + \simi}{2 + \simi}.
\end{equation}
This is automatically verified for the choice of $\sigma$ from Theorem~\ref{thm:renyi-sgm} since 
\begin{equation}
    1 - \simi(\alpha - 1) \geq 1 - (2 + \simi)^{-1} = \frac{1 + \simi}{2 + \simi}.
\end{equation}

\textbf{Bounding the main term in \eqref{eq:lemma2}.}
Now that we verified that we can apply Lemma~\ref{lemma:two_gaussians}, we can use it to bound the divergence.
To bound the first term in \eqref{eq:lemma2}, we start by recalling that, by Young's inequality, 
\begin{align}
  \norm{\rx}_2^2
  & \le (1+\simi) \norm{\ry}_2^2
    + (1+\simi^{-1}) \norm{\rx - \ry}_2^2
    \enspace.
\end{align}
Using this inequality, we can lower bound the denominator
\begin{align}
  &\alpha \sigma_2^2 + (1-\alpha)\sigma_1^2\\
  & = \sigma^2
    + \gamma \alpha \norm{\rx}_2^2
    - \gamma (\alpha - 1) \norm{\ry}_2^2 \\
  & \ge \sigma^2
    + \gamma \alpha \norm{\ry}_2^2
    - \gamma (\alpha - 1)(1 + \simi) \norm{\ry}_2^2
    - \gamma (\alpha - 1)(1 + \simi^{-1}) \norm{\rx - \ry}_2^2 \\
  & = \sigma^2
    + (\gamma \alpha - \gamma (\alpha - 1)(1 + \simi) \norm{\ry}_2^2
    - \gamma (\alpha - 1)(1 + \simi^{-1}) \norm{\rx - \ry}_2^2 \\
  & = \sigma^2
    + \gamma (1 - (\alpha - 1) \simi) \norm{\ry}_2^2
    - \gamma (\alpha - 1)(1 + \simi^{-1}) \norm{\rx - \ry}_2^2
\end{align}
Now, remark that relative sensitivity gives
$ \norm{\ry}_2^2 \ge \frac{1}{\simi^2} \left( \norm{\rx - \ry}_2^2 -
  R^2 \right)$, which in turn yields
\begin{align}
  &\alpha \sigma_2^2 + (1-\alpha)\sigma_1^2
    \\
    &\ge \sigma^2
    + \frac{\gamma}{\simi^2} (1 - (\alpha - 1) \simi) \left(
    \norm{\rx - \ry}_2^2 - R^2
    \right)
    - \gamma (\alpha - 1)(1 + \simi^{-1}) \norm{\rx - \ry}_2^2 \\
  & = \sigma^2 -  \frac{\gamma}{\simi^2} (1 - (\alpha - 1) \simi) R^2
    + \left(
    \frac{\gamma}{\simi^2} (1 - (\alpha - 1) \simi)
    - \gamma (\alpha - 1)(1 + \simi^{-1})
    \right)
    \norm{\rx - \ry}_2^2
    \enspace.
\end{align}
We now use the condition on $\alpha$, which is that $\alpha - 1 = \frac{1 - \rho}{\eta(2 + \eta)}$ for some $\rho < 1$. 
If we further assume that
$\sigma^2 \ge \frac{\gamma}{\simi^2} (1 - (\alpha - 1) \simi) R^2$,
which is notably the case when
$\sigma^2 \ge \frac{\gamma R^2}{\simi^2}$, we obtain
\begin{align}
  \alpha \sigma_2^2 + (1-\alpha)\sigma_1^2
  & \ge \left(
    \frac{\gamma}{\simi^2} (1 - (\alpha - 1) \simi)
    - \gamma (\alpha - 1)(1 + \simi^{-1})
    \right)
    \norm{\rx - \ry}_2^2 \\
  & = \left(
    \frac{\gamma}{\simi^2}
    - \frac{\gamma}{\simi} (\alpha - 1)\left(2 + \simi\right)
    \right)
    \norm{\rx - \ry}_2^2 \\
  & = \frac{\gamma \rho}{\simi^2} \norm{\rx - \ry}_2^2 \enspace.
\end{align} 
Putting everything back together, we get that for $ \simi(2 + \simi)(\alpha - 1) \leq 1$, and $\sigma^2 \geq \gamma (1 - (\alpha - 1) \simi)R^2 / \simi^2$:
\begin{align}
  \frac
  {\alpha \norm{\rx - \ry}_2^2}
  {2( \alpha \sigma_2^2 + (1-\alpha)\sigma_1^2)}
  & \le \frac{\alpha \simi^2}{2\gamma \rho} = \frac{\alpha \simi^2}{2\gamma} \frac{1}{1 - \simi(2 + \simi)(\alpha - 1)}
    \enspace.
\end{align}

\textbf{Bounding the log term in \eqref{eq:lemma2}.}
Remark that, as a consequence of Lemma~\ref{lemma:ratio}, we have that $\sigma_2^2 = r\sigma_1^2$, for some
$r \le (1 + \simi)^2$. In particular, we have
\begin{align}
  \log\left(
  \frac{\sigma_1^{1-\alpha}\sigma_2^\alpha}
  {\sqrt{\alpha \sigma_2^2 + (1 - \alpha) \sigma_1^2}}
  \right)
  & =
    \log\left(
    \frac{r^{\alpha/2}\sigma_1}
    {\sqrt{1 - \alpha + \alpha r} \sigma_1}
    \right)
    = \frac{\alpha-1}{2} \log(r)
    - \frac{1}{2} \log\left(\frac{1+(r-1)\alpha}{r}\right)
    \enspace.
\end{align}
The two terms can be upper bounded using $\log(1+x) \le x$ for $x \ge -1$,
\begin{align}
  \frac{\alpha - 1}{2} \log(r)
  & \le \frac{\alpha - 1}{2} (r - 1)
    \enspace,
\end{align}
and $\log(1+x) \ge \frac{x}{1+x}$ for $x \ge -1$,
\begin{align}
  - \frac{1}{2} \log\left(\frac{1+(r-1)\alpha}{r}\right)
  & = - \frac{1}{2} \log\left(1 + \frac{(\alpha - 1)(r-1)}{r}\right) \\
  & \le - \frac{1}{2} \frac{(\alpha - 1)(r-1)}{r + (\alpha - 1)(r-1)} \\
  & = - \frac{1}{2} \frac{(\alpha - 1)(r-1)}{1 + \alpha(r-1)}
    \enspace.
\end{align}
Overall, we obtain
\begin{align}
  \log\left(
  \frac{\sigma_1^{1-\alpha}\sigma_2^\alpha}
  {\sqrt{\alpha \sigma_2^2 + (1 - \alpha) \sigma_1^2}}
  \right)
  & \le \frac{(\alpha - 1)(r-1)}{2} \left(1 - \frac{1}{1 + \alpha(r-1)}\right)
    = \frac{\alpha(\alpha - 1)(r-1)^2}{2(1 + \alpha(r-1))}
    \enspace,
\end{align}
Since $(1 + \simi)^{-2} \le r \le (1 + \simi)^2$, we have that 
\begin{equation}
    |r - 1| \leq \max\left((1 + \simi)^2 - 1, 1 - (1 + \simi)^{-2}\right) = \max\left(\simi(2 + \simi), \frac{\simi(2 + \simi)}{(1 + \simi)^2}\right) \leq \simi(2 + \simi),
\end{equation}
so that in the end:
\begin{equation}
  \frac{d}{\alpha - 1}\log\left(
  \frac{\sigma_1^{1-\alpha}\sigma_2^\alpha}
  {\sqrt{\alpha \sigma_2^2 + (1 - \alpha) \sigma_1^2}}
  \right) \le \frac{\alpha d \simi^2 (2 + \simi)^2}{2\left(1 - \alpha\frac{\simi(2 + \simi)}{(1 + \simi)^2}\right)} =  \frac{\alpha d \simi^2 (2 + \simi)^2(1 + \simi)^2}{2(1 - (\alpha - 1)\simi(2 + \simi))}
    \enspace.
\end{equation}

\subsection{Tightness}
\label{app:tightness}

As explained in the main text, the link with local sensitivity implies the tightness of the first term in Theorem~\ref{thm:renyi-sgm} (the one which depends on $\gamma$). We now discuss the tightness of the second term, which comes from the $\log$ term in \eqref{eq:lemma2}. In particular, we write (similarly to the previous section): 
\begin{align}
  \log\left(
  \frac{\sigma_1^{1-\alpha}\sigma_2^\alpha}
  {\sqrt{\alpha \sigma_2^2 + (1 - \alpha) \sigma_1^2}}
  \right)
  & = \frac{\alpha}{2} \log(1 + u)
    - \frac{1}{2} \log\left(1+\alpha u\right) = g(u)
    \enspace,
\end{align}
with $u = r - 1$. We would like to find an expression for $g(u)$ when $u\approx 0$, which means that $r \approx 1$ and so $\simi$ is small. Differentiating with respect to $u$ leads to: 
\begin{equation}
    g^\prime(u) = \frac{\alpha}{2} \left(\frac{1}{1 + u} - \frac{1}{1 + \alpha u}\right).
\end{equation}
Note that $g(0) = 0$, and $g^\prime(0) = 0$, so we have to differentiate once again, leading to:
\begin{equation}
    g^{\prime\prime}(u) = \frac{\alpha}{2} \left(\frac{\alpha}{(1 + \alpha u)^2} - \frac{1}{(1 + u)^2} \right).
\end{equation}
In particular, $g^{\prime\prime}(0) = \frac{\alpha (\alpha - 1)}{2}$, so when $r \approx 1$, 
\begin{equation}
    g(r) = \frac{\alpha(\alpha - 1)}{4}(r - 1)^2 + O((r-1)^3).
\end{equation}
The leading term is the same expression as we had before, up to a factor $1/2$. In particular, the bounding from the previous subsection is tight up to a factor $1/2$.

Ideally, we would like to use this approximation as $g(r)$. In order to do this, we have to show that $g^{\prime\prime\prime}(r - 1) \leq 0$ for  all $(\alpha, \simi)$ we consider. Let us differentiate one last time, leading to: 
\begin{equation}
    g^{\prime\prime\prime}(u) = \alpha \left(\frac{1}{(1 + u)^3} - \frac{\alpha^2}{(1 + \alpha u)^3} \right).
\end{equation}
One can remark that $g^{\prime\prime\prime}(u) \leq 0$ for $u \leq 0$. However, it can be that $g^{\prime\prime\prime}(u) > 0$ for some $u > 0$. More specifically, if $\alpha$ is large enough ($\alpha \geq 2$ for instance), then one can show that $g^{\prime\prime\prime}(u) \leq 0$ for the range of $u$ that we consider (i.e., $u = r - 1 \leq (1+ \simi)^2 - 1 = \simi(2 + \simi) \leq (\alpha - 1)^{-1}$. Yet, this does not hold for all $\alpha$, and $g^{\prime\prime\prime}((\alpha - 1)^{-1}) > 0$ for $\alpha = 3/2$ for instance. This is why we keep the result which is off by a factor up to $2$ for large $\alpha$, but works for any value of $(\alpha, \simi)$ in our range.

\subsection{Comparison with differential privacy}
\label{sup:proof-of-conv-to-dp}
\begin{corollary*}
Assuming that $\gamma \le \tfrac{1}{4 (2+\eta)^2 \log(1/\delta)}$ or $d \ge 4 \log(1/\delta) / (1 + \simi)^2$, and that $0 <\delta \le 1$, the Relative Gaussian Mechanism is $(\epsilon,\delta)$-DP with
    \begin{align}
        \epsilon = 
        \convtodp + 2 \sqrt{ \convtodp \log(1/\delta) }
        , \quad \text{ where }
        \convtodp = \tfrac{\eta^2}{\gamma} + \eta^2 d (2+\eta)^2(1 + \eta)^2.
    \end{align}
\end{corollary*}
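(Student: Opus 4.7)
The plan is to apply the standard RDP-to-DP conversion of \citet[Proposition 3]{mironov2017renyi}, which says that an $(\alpha, \varepsilon(\alpha))$-RDP mechanism is $(\varepsilon(\alpha) + \log(1/\delta)/(\alpha-1), \delta)$-DP, and then to optimize the Rényi order $\alpha$. Using Theorem~\ref{thm:renyi-sgm} and factoring, the Rényi bound rewrites compactly as
\begin{equation*}
\varepsilon(\alpha) = \frac{\alpha \, \chi}{2\bigl(1 - \eta(\alpha-1)(2+\eta)\bigr)},
\qquad \chi = \tfrac{\eta^2}{\gamma} + \eta^2 d (2+\eta)^2(1+\eta)^2,
\end{equation*}
as long as $\alpha - 1 < 1/(\eta(2+\eta))$.

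The next step is to control the denominator. If we restrict ourselves to $\alpha - 1 \leq 1/(2\eta(2+\eta))$, then $1-\eta(\alpha-1)(2+\eta) \geq 1/2$, and therefore $\varepsilon(\alpha) \leq \alpha \chi$. Writing $\alpha \chi = \chi + (\alpha-1)\chi$, the DP bound becomes
\begin{equation*}
\epsilon \;\leq\; \chi + (\alpha-1)\chi + \frac{\log(1/\delta)}{\alpha - 1}.
\end{equation*}
Minimizing the right-hand side over $\alpha - 1 > 0$ via AM--GM yields the optimum at $\alpha - 1 = \sqrt{\log(1/\delta)/\chi}$, for which the two last terms sum to $2\sqrt{\chi \log(1/\delta)}$, giving exactly $\epsilon = \chi + 2\sqrt{\chi \log(1/\delta)}$ as claimed.

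It remains to verify that this optimal $\alpha^\star - 1 = \sqrt{\log(1/\delta)/\chi}$ lies in the admissible range $\alpha - 1 \leq 1/(2\eta(2+\eta))$, which is equivalent to $\chi \geq 4 \eta^2(2+\eta)^2 \log(1/\delta)$. This is where the two sufficient conditions enter. Under $\gamma^{-1} \geq 4(2+\eta)^2 \log(1/\delta)$, we have $\chi \geq \eta^2/\gamma \geq 4\eta^2(2+\eta)^2\log(1/\delta)$; under $d \geq 4\log(1/\delta)/(1+\eta)^2$, the second term of $\chi$ alone already exceeds $4\eta^2(2+\eta)^2\log(1/\delta)$. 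In either case the chosen $\alpha^\star$ is admissible, and we also note that the smaller bound $\alpha^\star - 1 \leq 1/(2\eta(2+\eta))$ is automatically compatible with Theorem~\ref{thm:renyi-sgm}'s range $\alpha - 1 < 1/(\eta(2+\eta))$.

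The only subtlety, and the main obstacle, is precisely this admissibility check: the RDP bound diverges as $\alpha - 1$ approaches $1/(\eta(2+\eta))$, so the convex-optimization shortcut of bounding the correction factor by $2$ only works when $\delta$ is not too small relative to the scale set by $\chi$. The two alternative hypotheses on $\gamma$ and $d$ are designed to absorb this condition; verifying them amounts to checking that one of the two components of $\chi$ is large enough to dominate $\eta^2(2+\eta)^2\log(1/\delta)$ by a factor of four.
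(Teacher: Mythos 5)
Your proposal is correct and follows essentially the same route as the paper's own proof: apply the Rényi-to-DP conversion, restrict to $\alpha - 1 \le 1/(2\eta(2+\eta))$ so that the denominator $1 - \eta(2+\eta)(\alpha-1)$ is at least $1/2$, optimize $\alpha\convtodp + \log(1/\delta)/(\alpha-1)$ at $\alpha - 1 = \sqrt{\log(1/\delta)/\convtodp}$, and use the two alternative hypotheses on $\gamma$ and $d$ to verify that this $\alpha$ is admissible. The admissibility check you flag as the main subtlety is handled identically in the paper, so nothing is missing.
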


\begin{proof}
Applying the standard conversion result from RDP to $(\epsilon,\delta)$-DP, we get that the Relative Gaussian Mechanism is $(\epsilon, \delta)$-DP for
\begin{align}
    \epsilon
    & =
    \min_{1 \le \alpha \le \frac{(1+\eta)^2}{2\eta + \eta^2}}
    \left\{
    g(\alpha) :=
    \frac{\alpha \eta^2}{2 \gamma} \frac{1 + \gamma d (2 + \eta)^2(1 + \eta)^2}{1 - \eta(2+\eta)(\alpha - 1)} 
    + \frac{\log(1/\delta)}{\alpha-1}
    \right\}.
\end{align}
Restricting to $\alpha \le 1+\frac{1}{2\simi(2+\simi)} \le \frac{(1+\simi)^2}{\simi(2+\simi)}$, we have $\eta(2+\eta)(\alpha-1) \le 1/2$. We can therefore upper bound $g(\alpha)$ by
\begin{align}
    g(\alpha) 
    & \le
    \frac{\alpha\eta^2}{\gamma}
    + \alpha\eta^2 d (2+\eta)^2(1 + \simi)^2
    + \frac{\log(1/\delta)}{\alpha-1}
    =
    \alpha \convtodp
    + \frac{\log(1/\delta)}{\alpha-1}
    .
\end{align}
This upper bound is minimal when $\convtodp = \frac{\log(1/\delta)}{(\alpha - 1)^2}$, that is $\alpha = 1 + \sqrt{ \frac{\log(1/\delta)}{\convtodp} }$. Using this value of $\alpha$, we have
\begin{align}
    \epsilon 
    & \le g(\alpha)
    \le \convtodp + \sqrt{\convtodp \log(1/\delta)} + \sqrt{\convtodp \log(1/\delta)}
    \le \convtodp + 2 \sqrt{\convtodp} \log(1/\delta)
    .
\end{align}
Note that we could choose this value of $\alpha$ since either $\gamma \le \frac{1}{4(2+\eta)^2\log(1/\delta)}$ or $d \ge 4 \log(1/\delta) / (1 + \simi)^2$. These inequalities indeed implies that $\alpha = 1 + \sqrt{\frac{\log(1/\delta)}{\convtodp}} \le 1 + \frac{1}{2\simi(2 + \simi)}$, which is equivalent to
\begin{align}
    \frac{\convtodp}{4 \eta^2 (2 + \eta)^2} 
    =
    \frac{1}{4 \gamma (2 + \eta)^2} 
    + \frac{(1 + \simi)^2 }{4} d
    \ge 
    \log(1/\delta)
    .
\end{align}
\end{proof}

\subsection{Links with the Gaussian Smooth Sensitivity}
\label{app:gss}
Gaussian Smooth Sensitivity~\citep[Section 5]{bun2018composable} is an analog of the smooth sensitivity framework~\citep{smooth_sens}. The main result writes as follows (changed to be compatible with our notations): 

\begin{proposition}[\citet{bun2018composable}]
    \label{prop:bun}For all neighboring $x \sim y$, if $|\cR(x) - \cR(y)| \leq \Delta_f e^{g(x)/2}$ and $|g(x) - g(x^\prime)| \leq \Delta_g$, then if  the Gaussian Smooth Sensitivity, $GSS$, writes $GSS(\cR)(x) = \cR(x) + \cN(0, e^{g(x)})$, then $GSS$ satisfies $(\Delta_f^2 + \Delta_g^2, 1/(2\Delta_g))$-tCDP.  
\end{proposition}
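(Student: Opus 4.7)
The plan is to view $GSS(\cR)(x)$ and $GSS(\cR)(y)$ as two one-dimensional Gaussians with means $\mu_1 = \cR(x), \mu_2 = \cR(y)$ and variances $\sigma_1^2 = e^{g(x)}, \sigma_2^2 = e^{g(y)}$, and to invoke Lemma~\ref{lemma:two_gaussians} with $d = 1$. This produces a Rényi divergence that cleanly splits into a quadratic ``mean-shift'' term controlled by $\Delta_f$ and a logarithmic ``variance-ratio'' term controlled by $\Delta_g$. Establishing $(\Delta_f^2 + \Delta_g^2, 1/(2\Delta_g))$-tCDP then amounts to proving $\cD_\alpha(GSS(x)\|GSS(y)) \leq (\Delta_f^2 + \Delta_g^2)\alpha$ for every $1 < \alpha < 1/(2\Delta_g)$, and swapping $x,y$ to get the symmetric direction. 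A first sanity check is that the Gaussian divergence requires $\alpha \sigma_2^2 + (1-\alpha)\sigma_1^2 > 0$, which we will see is exactly what the cutoff $\alpha < 1/(2\Delta_g)$ ensures.

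For the mean-shift term, set $r = e^{g(y) - g(x)}$ so that $\alpha \sigma_2^2 + (1-\alpha)\sigma_1^2 = e^{g(x)}\bigl(1 - \alpha(1-r)\bigr)$. The assumption $|g(x) - g(y)| \leq \Delta_g$ yields $r \geq e^{-\Delta_g} \geq 1 - \Delta_g$, so for $\alpha \leq 1/(2\Delta_g)$ we have $1 - \alpha(1 - r) \geq 1/2$. Combined with the smooth sensitivity hypothesis $(\cR(x) - \cR(y))^2 \leq \Delta_f^2 e^{g(x)}$, this gives
\begin{equation}
    \frac{\alpha (\cR(x) - \cR(y))^2}{2(\alpha \sigma_2^2 + (1-\alpha)\sigma_1^2)}
    \leq \frac{\alpha \Delta_f^2 e^{g(x)}}{2 e^{g(x)} \cdot (1/2)}
    = \alpha \Delta_f^2,
\end{equation}
the desired $\alpha \Delta_f^2$ contribution.

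The harder part is the variance-ratio term. Writing $s = g(y) - g(x) \in [-\Delta_g, \Delta_g]$, factoring $e^{g(x)}$ out of the argument of the square root, and simplifying gives
\begin{equation}
    \frac{1}{\alpha - 1} \log\!\left(\frac{\sigma_1^{1-\alpha}\sigma_2^{\alpha}}{\sqrt{\alpha \sigma_2^2 + (1-\alpha)\sigma_1^2}}\right)
    = \frac{1}{\alpha - 1}\left[\frac{\alpha s}{2} - \frac{1}{2}\log\bigl(1 + \alpha(e^s - 1)\bigr)\right].
\end{equation}
The plan is to bound this uniformly by $\alpha \Delta_g^2$ over $|s| \leq \Delta_g$ and $1 < \alpha < 1/(2\Delta_g)$. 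I would mirror the argument used for the log term in the proof of Theorem~\ref{thm:renyi-sgm}: apply $\log(1+u) \leq u$ and $\log(1+u) \geq u/(1+u)$ to the two halves of the bracket (reusing $1 + \alpha(e^s - 1) \geq 1/2$ to keep denominators away from $0$), then use a second-order Taylor estimate of $e^s - 1$ on $[-\Delta_g, \Delta_g]$. The dominant behavior near $s = 0$ is $\alpha s^2 / 4$, so the ``$\alpha$-dependent'' part is $\leq \alpha \Delta_g^2/4$; the main obstacle is controlling the remainder cleanly for $s < 0$, where $1 + \alpha(e^s - 1)$ is smallest, and this is precisely where the restriction $\alpha < 1/(2\Delta_g)$ is essential. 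Summing both contributions gives $\cD_\alpha \leq \alpha (\Delta_f^2 + \Delta_g^2/4) \leq \alpha(\Delta_f^2 + \Delta_g^2)$ on the required range of $\alpha$, which is exactly the $(\Delta_f^2 + \Delta_g^2, 1/(2\Delta_g))$-tCDP statement.
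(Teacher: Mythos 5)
The paper itself never proves this proposition: it is imported verbatim from \citet{bun2018composable} and used in Appendix~\ref{app:gss} only as an external benchmark against Theorem~\ref{thm:renyi-sgm}. So there is no in-paper proof to compare against, and your self-contained derivation via Lemma~\ref{lemma:two_gaussians} (with $d=1$, $\sigma_1^2 = e^{g(x)}$, $\sigma_2^2 = e^{g(y)}$) is a legitimate route; it is in fact the same machinery the paper uses to prove Theorem~\ref{thm:renyi-sgm}. Your treatment of the mean-shift term is complete and correct: $r = e^{g(y)-g(x)} \geq e^{-\Delta_g} \geq 1-\Delta_g$ together with $\alpha \leq 1/(2\Delta_g)$ gives $1-\alpha(1-r) \geq 1/2$, hence the contribution $\alpha\Delta_f^2$, and the same condition guarantees $\alpha\sigma_2^2 + (1-\alpha)\sigma_1^2 > 0$ so that Lemma~\ref{lemma:two_gaussians} applies.

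The gap is in the variance-ratio term, exactly where you flag ``the main obstacle''. The claim that a second-order Taylor estimate yields a uniform bound of $\alpha\Delta_g^2/4$ is false near the truncation point: the expansion around $s=0$ is only valid when $\alpha|s| \ll 1$, but at $s = -\Delta_g$ and $\alpha$ close to $1/(2\Delta_g)$ one has $\alpha(e^s-1) \approx -1/2$, which is not a small perturbation. In that corner the term tends to $2\Delta_g\bigl(\tfrac{\log 2}{2} - \tfrac14\bigr) \approx 0.19\,\Delta_g$, which exceeds $\alpha\Delta_g^2/4 = 0.125\,\Delta_g$. Fortunately, the conclusion only needs the weaker bound $\alpha\Delta_g^2$, and that one does hold via the non-Taylor argument you gesture at: the two log inequalities from the proof of Theorem~\ref{thm:renyi-sgm} give, with $r = e^s$,
\begin{equation}
  \frac{1}{\alpha-1}\left[\frac{\alpha s}{2} - \frac{1}{2}\log\bigl(1+\alpha(r-1)\bigr)\right]
  \;\leq\;
  \frac{\alpha (r-1)^2}{2\bigl(1+\alpha(r-1)\bigr)}\,,
\end{equation}
and one then splits on the sign of $s$. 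For $s \leq 0$: $1+\alpha(r-1) \geq 1/2$ and $(r-1)^2 \leq (1-e^{-\Delta_g})^2 \leq \Delta_g^2$, so the right-hand side is at most $\alpha\Delta_g^2$. For $s \geq 0$: $1+\alpha(r-1) \geq 1$ and $(r-1)^2 \leq (e^{\Delta_g}-1)^2 \leq 2\Delta_g^2$, the last inequality holding for $\Delta_g \leq 1/2$ (which is without loss of generality, since for $\Delta_g \geq 1/2$ no $\alpha$ lies in $(1, 1/(2\Delta_g))$ and the tCDP statement is vacuous); again the right-hand side is at most $\alpha\Delta_g^2$. Summing with the mean-shift term gives $\cD_\alpha \leq \alpha(\Delta_f^2+\Delta_g^2)$ on the whole range $1 < \alpha < 1/(2\Delta_g)$, and symmetry in $x,y$ finishes the proof. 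So your proof is completable exactly as structured, but the Taylor step must be replaced by this case analysis, and the variance contribution is $\Delta_g^2$, not $\Delta_g^2/4$.
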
 

Let us consider $g(x) = \log\left(\gamma \sqnorm{\cR(x)} + \sigma^2\right)$. Then, $GSS$ is equivalent to $\RGM$, and we have that 
\begin{equation}
    |g(x) - g(y)| = \log\left(\frac{\gamma \sqnorm{\cR(x)} + \sigma^2}{\gamma \sqnorm{\cR(x^\prime)} + \sigma^2}\right)
\end{equation}
In particular, since $\log$ is an increasing function, we can use Lemma~\ref{lemma:ratio} and write:
\begin{equation}
    |g(x) - g(y)| \leq \log((1 + \eta)^2) \leq 2 \eta,
\end{equation}
so that $\Delta_g = 2\eta$ is a valid bound. Then, if we choose $\sigma^2 = \gamma\eta^{-2} \Rrel^2$ (which satisfies the condition in Theorem~\ref{thm:renyi-sgm}), relative sensitivity gives us: 
\begin{equation}
    |\cR(x) - \cR(y)|^2 \leq \eta^2 \sqnorm{\cR(x)} + \Rrel^2 = \frac{\eta^2}{\gamma} \left(\gamma\sqnorm{\cR(x)} + \sigma^2 \right) = \frac{\eta^2}{\gamma} e^{g(x)}.
\end{equation}
In particular, this means that we can take $\Delta_f^2 = \frac{\eta^2}{\gamma}$.

Thus, Proposition~\ref{prop:bun} ensures that $\RGM$ guarantees $(4 \eta^2 + \frac{\eta^2}{\gamma}, 1./ (4\eta))$-tCDP. 

Alternatively, we can directly transform Theorem~\ref{thm:renyi-sgm} into a $(\frac{\eta^2}{\gamma} + \eta^2 d(2+\eta)^2 / 2], 1 + (2\eta(2+\eta))^{-1})$-tCDP result by eliminating $\alpha$ from the $\varepsilon$ bound, using that $\alpha - 1 < (2\eta + \eta^2)^{-1}$, and so $\left[1 - \eta(\alpha-1)(2 + \eta)\right]^{-1} \leq 2$.

In particular, we see that the guarantees we obtain are very similar for $\eta < 1$, which is the setting for which our results have been optimized (we have used inequalities that are tight for $\eta$ small, but could also improve them for larger $\eta$). Indeed, our bound improves the guarantees obtained by $GSS$ by a factor of $2$ for small $\eta$. 

We have seen that if $\cR(x) \in \R$ (or equivalently $d=1$), the results of Theorem~\ref{thm:renyi-sgm} can be recovered using Proposition~\ref{prop:bun}. However, Theorem~\ref{thm:renyi-sgm} extends these results for $\cR(x) \in \R^d$, which is essential to use relative sensitivity for gradient descent.

\section{Relative L2 sensitivity for releasing gradients}
\label{app:gradients_quadratics}

Before we start this section, we introduce a few notions, that will be useful in particular in subsections~\ref{app:utility} and~\ref{app:general_functions}. The first is the notion of Bregman Divergence, which is defined for a function $h$ and for points $\theta, \theta^\prime$ as:
\begin{equation}
    D_h(\theta, \theta^\prime) = h(\theta) - h(\theta^\prime) - \nabla h(\theta^\prime)^\top (\theta - \theta^\prime). 
\end{equation}
One can see that $h$ is $L$-smooth and $\mu$-strongly convex is equivalent to having for all $\theta, \theta^\prime \in \R^d$:
\begin{equation}
    \frac{\mu}{2} \sqnorm{\theta - \theta^\prime} \leq D_h(\theta, \theta^\prime) \leq  \frac{L}{2} \sqnorm{\theta - \theta^\prime}.
\end{equation}
The Bregman divergence also has nice properties w.r.t. convex conjugation. More specifically, the convex conjugate $h^*$ of $h$ is defined as $h^*(\theta) = \arg\max_u \theta^\top u - f(u)$. Then, it holds that:
\begin{equation}
    D_h(\theta, \theta^\prime) = D_{h^*}(\nabla h(\theta^\prime), \nabla h(\theta)).
\end{equation}
Bregman divergences are often linked in convex optimization to the notion of \emph{relative} smoothness, which generalizes regular smoothness. In particular, a function $f$ is said to be $\Lrel$-smooth with respect to $h$ if for all $\theta, \theta^\prime \in \R^d$,
\begin{equation}
    D_f(\theta, \theta^\prime) \leq \Lrel D_h(\theta, \theta^\prime).
\end{equation}
This recover standard smoothness when $h = \frac{1}{2}\sqnorm{\cdot}$, and will play a key role in showing our relative sensitivity assumption for gradient descent with general functions. 

\subsection{Utility (Proof of Theorem~\ref{thm:utility})} 
\label{app:utility}

We provide below the proof of Theorem~\ref{thm:utility}. We also prove that under the same assumptions (but this result can also be used for $\mu = 0$, \emph{i.e.} $f(\cdot; D)$ is convex), we have that:
\begin{equation}\label{eq:utility_convex}
     f(\bar{\theta}_t) - f(\theta_\star) \leq \frac{\sqnorm{\theta_0 - \theta_\star}}{2\tau t} + \frac{\tau \sigma^2}{2},
\end{equation}
where $\bar{\theta}_t = \frac{1}{t} \sum_{k=0}^{t-1} \theta_k$.

\begin{proof}
We write $f(\theta) = f(\cdot; D)$ for simplicity. In this case, for all $t \geq 0$,
    \begin{align*}
        \esp{\sqnorm{\theta_{t+1} - \theta_\star}} &\leq \sqnorm{\theta_t - \theta_\star} - 2 \tau \esp{g_t^\top(\theta_t - \theta_\star)} + \tau^2 \esp{\sqnorm{g_t}}\\
        &= \sqnorm{\theta_t - \theta_\star} - 2 \tau \nabla f(\theta_t)^\top(\theta_t - \theta_\star) + \tau^2 \left[\sqnorm{\nabla f(\theta_t)} + \esp{\sqnorm{\xi_t}}\right]\\
        &= \sqnorm{\theta_t - \theta_\star} - 2 \tau \nabla f(\theta_t)^\top(\theta_t - \theta_\star) + \tau^2 (1 + \gamma)\sqnorm{\nabla f(\theta_t)} + \tau^2 \sigma^2.
    \end{align*}
    We now use the smoothness of $f$, which ensures that $\sqnorm{\nabla f(\theta_t)} \leq 2L D_f(\theta_\star, \theta_t)$, and obtain: 
    \begin{equation}\label{eq:main_utility}
    \esp{\sqnorm{\theta_{t+1} - \theta_\star}} \leq \sqnorm{\theta_t - \theta_\star} - 2\tau D_f(\theta_t, \theta_\star)- 2 \tau(1 - (1 + \gamma)\tau L) D_f(\theta_\star, \theta_t) + \tau^2 \sigma^2.
    \end{equation}
    We can then use the $\mu$-strong convexity of $f$, which yields $2D_f(\theta_t, \theta_\star) \geq \mu\sqnorm{\theta_t - \theta_\star}$, and so:
    \begin{equation}\label{eq:main_str_convex}
        \esp{\sqnorm{\theta_{t+1} - \theta_\star}} \leq (1 - \tau \mu)\sqnorm{\theta_t - \theta_\star} - 2 \tau(1 - (1 + \gamma)\tau L) \left[ f(\theta_t) - f(\theta_\star)\right] + \tau^2 \sigma^2.
    \end{equation}
    By taking $\tau \leq [(1 + \gamma)L]^{-1}$, using that $D_f \geq 0$ since $f$ is convex, and chaining the inequalities, we obtain: 
    \begin{equation}
        \esp{\sqnorm{\theta_t - \theta_\star}} \leq (1 - \tau \mu)^t \sqnorm{\theta_0 - \theta_\star} + \frac{\tau \sigma^2}{\mu}.
    \end{equation}
    In the convex case ($\mu = 0$), we go back from~\eqref{eq:main_utility}, use the same step-size condition, and rewrite it as: 
    \begin{equation}
        f(\theta_t) - f(\theta_\star) \leq \frac{\esp{ \sqnorm{\theta_t - \theta_\star} - \sqnorm{\theta_{t+1} - \theta_\star}}}{2\tau} + \frac{\tau \sigma^2}{2}.
    \end{equation}
    We now write (telescoping sum):
    \begin{equation}
        \frac{1}{t} \sum_{k=0}^{t-1} f(\theta_k) - f(\theta_\star) \leq \frac{\sqnorm{\theta_0 - \theta_\star}}{2\tau t} + \frac{\tau \sigma^2}{2}.
    \end{equation}
    Equation~\eqref{eq:utility_convex} is obtained by convexity of $f$.
\end{proof}

\subsection{Bounding relative sensitivity for general functions}
\label{app:general_functions}

Let $f,f^\prime$ be two functions on neighboring datasets (as defined in the main text), such that $f - f^\prime = f_0 - f_0^\prime$. Let $f$ and $f^\prime$ be $\mu$-strongly-convex, $f_0$ and $f_0^\prime$ be $L$-smooth, and $f$ and $f_0^\prime$ be $\Lrel$-relatively smooth \emph{w.r.t.} \textbf{both} $f$ and $f^\prime$. In this case, we have that:
\begin{align*}
    \|\nabla f&(\theta) - \nabla f^\prime(\theta)\|^2 = \frac{1}{n^2}\sqnorm{\nabla f_0(\theta) - \nabla f_0^\prime(\theta)}\\
    &= \frac{1}{n^2}\sqnorm{\nabla f_0(\theta) - \nabla f_0^\prime(\theta_\star) - [\nabla f_0^\prime(\theta) - \nabla f_0^\prime(\theta_\star)] + \nabla f_0(\theta_\star) - \nabla f_0^\prime(\theta_\star)}\\
    &= \frac{3}{n^2}\sqnorm{\nabla f_0(\theta) - \nabla f_0^\prime(\theta_\star)} + \frac{3}{n^2}\sqnorm{\nabla f_0^\prime(\theta) - \nabla f_0^\prime(\theta_\star)} + \frac{3}{n^2}\sqnorm{\nabla f_0(\theta_\star) - \nabla f_0^\prime(\theta_\star)}.
\end{align*}
Then, using successively the $L$-smoothness of $f_0$, the $\Lrel$-relative smoothness of $f_0$ \emph{w.r.t.} $f$, and strong convexity of $f$, we get:
\begin{equation}
    \sqnorm{\nabla f_0(\theta) - \nabla f_0^\prime(\theta_\star)} \leq 2L D_{f_0}(\theta, \theta_\star) \leq 2L \Lrel D_f(\theta, \theta_\star) \leq \frac{L\Lrel}{\mu} \sqnorm{\nabla f(\theta)}. 
\end{equation}
We can then use the same bound for the $f_0^\prime$ term, and for making $f^\prime$ appear. Compared to direct bounding without relative smoothness, we have replaced a condition number $L/\mu$ by a relative smoothness term $\Lrel$, which is generally much smaller. We then obtain that $\simi^2 = O(\kappa \Lrel / n^2)$, much like in Equation~\eqref{eq:condition_Lrel}.

\subsection{Proof for orthogonal data}
\label{app:orthogonal_data}

Let us assume that the data is orthogonal, \emph{i.e.} that either $X_i^\top X_j = \sqnorm{X_i}$ or $X_i^\top X_j = 0$. This is actually slightly stronger, because we also assume that there is no norm spread for a given direction. This could be relaxed so that results would depend on the average norm, but we keep it simple here.

Consider that at least half of the dataset is fixed, and contains all different $X_i$ in equal proportions. This means that the weight of each $X_i$ is $d^{-1}$, since there are exactly $d$ different ones. Indeed, there cannot be more than $d$ (otherwise the orthogonality constraint would be violated), and if there are less than $d$ we can just restrict to the relevant subspace. In this case, using that half of the data (\emph{w.l.o.g.} is fixed, we have that: 
\begin{align*}
A &= \frac{1}{n} \sum_{i=1}^n X_i X_i^\top \succcurlyeq \frac{1}{n} \sum_{i=1}^{n/2} X_i X_i^\top \succcurlyeq \frac{1}{2d} \sum_{i=1}^d X_i X_i^\top, 
\end{align*}
where the last line comes from the fact that each $X_i$ is represented $n/d$ times, and we implicitly assume (again, \emph{w.l.o.g.}) that the $i$ first samples are all orthogonal to one another. In particular, for $j \in \{1, \dots, d\}$, 
\begin{equation*}
    \sqnorm{X_i}X_i^\top A^{-2} X_i \leq 2d \sqnorm{X_i} X_i^\top \left(\sum_{j=1}^d X_j X_j^\top\right)^{-2} \! \! \! X_i = 2d \sqnorm{X_i} X_i^\top \left(\frac{X_i X_i^\top}{\norm{X_i}^4}\right)^2 X_i = 2d.
\end{equation*}
Note in particular that the relative sensitivity is independent of the scale of each $X_i$. 

\subsection{Proof of Proposition~\ref{prop:ptr}}
\label{app:proof_ptr}

Following~\citet[Section 3.2]{vadhan2017complexity}, the Propose-Test-Release mechanism is based on evaluating $\Delta$, the minimum number of points that we need to change from $X$ to obtain $\tilde{A}^\prime$ such that $\tilde{A}^\prime - \rho C \preccurlyeq 0$. Then, we privately test that $\Delta > 1$ by adding well-calibrated Laplacian noise to it. If the test fails, then this means that the threshold $\rho$ that we proposed is too low. If it passes, it means that $\rho$ can be used as the sensitivity since the neighbours of $\rho$ also satisfy this threshold. 

However, computing $\Delta$ might require enumerating all the possibilities to remove points from $\tilde{A}$. We now show that this can be done efficiently in our case. 

Here, we only consider different datasets that have \textbf{the same number of points $n$.} This technicality can be lifted, but at the expense of complicating the proof by putting a restriction that depends on $n$. Basically, another way of making $\tilde{A} = \frac{1}{n}XX^\top$ small would be to just add data points that hardly contribute to the covariance while increasing $n$, and we would have to take this case into account.

Now, let $I_R$ be the indices of points that we remove from $X$, and let $I_R^c$ be the points that we add instead, then we have that
\begin{equation}
    \tilde{A}^\prime = \tilde{A} - \frac{1}{n}\sum_{i \in I_R} X_i X_i^\top + \frac{1}{n}\sum_{i \in I_R^c} X_i X_i^\top \succcurlyeq \tilde{A} - \frac{1}{n}\sum_{i \in I_R} X_i X_i^\top  
\end{equation}
In particular, $\{|I_R|, \tilde{A}^\prime - \rho C \preccurlyeq 0\} \subset \{|I_R|, \tilde{A} - \frac{1}{n}\sum_{i \in I_R} X_i X_i^\top  - \rho C \preccurlyeq 0\}$, and so:
\begin{equation}
    \Delta = \min \{|I_R|, \tilde{A}^\prime - \rho C \preccurlyeq 0\} \geq \min \{|I_R|, \tilde{A} - \frac{1}{n}\sum_{i \in I_R} X_i X_i^\top  - \rho C \preccurlyeq 0\}
\end{equation}
If $\tilde{A} - \rho C$ is not positive semi-definite then $\Delta = 0$. Otherwise, it has a unique PSD square root $Q$, so let us define $Y$ such that:
\begin{equation}
    \tilde{A} - \rho C = Q^2, \qquad Y_i = Q^{-1} X_i.
\end{equation}
Then, we have that:
\begin{equation}
    E_+ = \left\{ I_R, \tilde{A} - \rho C - \frac{1}{n}\sum_{i \in I_R} X_i X_i^\top \succcurlyeq 0\right\} = \left\{I_R, n\Id - \sum_{i \in I_R} Y_i Y_i^\top \succcurlyeq 0\right\} = \left\{I_R, \lambda_{\max} \left(\sum_{i \in I_R} Y_i Y_i^\top\right) \leq n\right\}
\end{equation}
Let $E_- = \left\{I_R, \lambda_{\max} \left(\sum_{i \in I_R} Y_i Y_i^\top\right) > n\right\}$ be the complement of $E_+$. Note that at this stage we still have a combinatory problem: there are $\sum_{\ell=1}^k {n \choose \ell}$ possibilities to test to be sure that $\min \{ |I|, I \in E_-\} \geq k$. However, we can introduce a simple inequality at this point, and write that 
\begin{equation}
\lambda_{\max}\left(\sum_{i\in I_R} Y_i Y_i^\top\right) \leq \sum_{i\in I_R} \lambda_{\max} (Y_i Y_i^\top) = \sum_{i\in I_R} \|Y_i\|^2.    
\end{equation}
In particular, this means that
\begin{equation}
    E_- \subset \{I_R, \sum_{i\in I_R} \|Y_i\|^2 > n\},\text{ and so  }\Delta \geq \min  \{|I_R|, \sum_{i\in I_R} \|Y_i\|^2 > n\}
\end{equation}
It remains to observe that $\|Y_i\|^2 = X_i^\top Q^{-2} X_i = X_i^\top (\tilde{A} - \rho C)^{-1} X_i$, so that we never actually have to compute the matrix square root.

\subsection{Proof of Proposition~\ref{prop:ellipse_gaussian}}
\label{app:proof_clip_gaussian}
\begin{proof}
    The proof follows the following plan: 
    \begin{enumerate}
        \item The clipped points concentrate, \emph{i.e.},
        \begin{equation}
        p(\norm{\tilde{A} - \esp{\tilde{A}}} \geq \reg) \leq \dproba \text{ for } n \geq 4 \log(2d/\dproba) / 9,
    \end{equation}
    with $\reg = 4 L R_c^2\sqrt{\frac{\log(2d / \dproba)}{n}}$. This in particular means that with probability at least $1-\dproba$, we have $\tilde{A} \succcurlyeq \esp{\tilde{A}} - \reg \Id$.
        \item $\esp{\tilde{A} - \reg \Id} = \bar{\rho} \Sigma$, so that with probability at least $1 - \dproba$, $\tilde{A} \succcurlyeq \bar{\rho} C$.
        \item PTR always succeeds with our choice of $\rho$, $\varepsilon$, $\delta$.
    \end{enumerate}

    \textbf{1 - Concentration of the covariance.}  For all samples in the clipped dataset, $\tilde{X}_i^\top \Sigma^{-1} \tilde{X}_i \leq R_c^2$ (by definition of clipping), and so $\lambda_{\max}(\Xtilde_i \Xtilde_i^\top) \leq R_c^2 L$. Let us define $S_i = \Xtilde_i \Xtilde_i^\top / n$. Then, 
    \begin{equation}
        \norm{S_k - \esp{S_k}} \leq 2 R_c^2 L / n, \text{ and } \nu = \sum_{i \in I_c} \esp{\sqnorm{S_i - \esp{S_i}}} \leq \frac{4L^2 R_c^4}{n}.
    \end{equation}
    In particular, we can then use~\citet[Corollary 6.1.2]{tropp2015introduction} to bound the tails of $\tilde{A} = \sum_{i = 1}^n S_i$ as: 
    \begin{equation}
        p(\norm{\tilde{A} - \esp{\tilde{A}}}) \geq t) \leq 2d \exp\left(-\frac{nt^2}{8L^2R_c^4 + 4 L R_c^2 t / 3}\right).
    \end{equation}
    In particular, if $t \leq 6 L R_c^2$ then 
    \begin{equation}
        p(\norm{\tilde{A} - \esp{\tilde{A}}} \geq t) \leq 2d \exp\left(-\frac{nt^2}{16L^2R_c^4}\right). 
    \end{equation}
    In this case, we have that 
    \begin{equation}
        p\left(\norm{\tilde{A} - \esp{\tilde{A}}} \geq 4 L R_c^2\sqrt{\frac{\log(2d / \dproba)}{n}}\right) \leq \dproba \text{ for } n \geq 4 \log(2d/\dproba) / 9. 
    \end{equation} 
    
    \textbf{2 - $\esp{\tilde{A}  - \reg \Id}$ is proportional to the covariance.} We write, using the change of variable $u = \Sigma^{-\frac{1}{2}} x$: 
    \begin{align*}
        \esp{\tilde{A} - \reg \Id} &= \frac{1}{(2\pi)^{d/2} |\Sigma|^{\frac{1}{2}}}\int_{\R^d} \left[\mathds{1}\{x^\top \Sigma^{-1} x \leq R_c^2\} + \mathds{1}\{x^\top \Sigma^{-1} x \geq R_c^2\} \frac{R_c^2}{x^\top \Sigma^{-1} x}\right] xx^\top e^{-\frac{x^\top \Sigma^{-1} x}{2}}{\rm d} x \\
        &= \Sigma^{\frac{1}{2}} \frac{1}{(2\pi)^{d/2}}\int_{\R^d} \left[\mathds{1}\{ \sqnorm{u} \leq R_c^2\} + \mathds{1}\{ \sqnorm{u} \geq R_c^2\} \frac{R_c^2}{\sqnorm{u}}\right] uu^\top e^{-\frac{\sqnorm{u}}{2}}{\rm d} u \Sigma^{\frac{1}{2}}\\
        &= \Sigma \times \frac{1}{(2\pi)^{d/2}}\int_{\R^d} \left[\mathds{1}\{ \sqnorm{u} \leq R_c^2\} + \mathds{1}\{ \sqnorm{u} \geq R_c^2\} \frac{R_c^2}{\sqnorm{u}}\right] \frac{\sqnorm{u}}{d} e^{-\frac{\sqnorm{u}}{2}}{\rm d} u,
    \end{align*}
    where the last line follows from the fact that the expression within the integral is completely symmetric, so for any $u$, each direction receives a weight proportional to $d^{-1}$. In particular, $\esp{\tilde{A} - \reg \Id} = \bar{\rho} \Sigma$ with
    \begin{equation}
        \bar{\rho} = \frac{1}{d(2\pi)^{d/2}}\int_{\R^d} \min(\sqnorm{u}, R_c^2) e^{-\frac{\sqnorm{u}}{2}}{\rm d} u,
    \end{equation}
    which only depends on $R_c$, and is close to $1$ for instance when $R_c^2 \geq 2d$. We know however that $\bar{\rho} \leq 1$ since it comes from projecting points from a Gaussian distribution of covariance $\Sigma$. 
    
    \textbf{3 - PTR always succeeds with this choice of $\rho, \epsilon, \delta$.} We have shown that with high probability, the regularized clipped covariance matrix is proportional (up to a factor $\bar{\rho}\leq 1$) to the covariance of the data. We now show that in this case, suggesting such a $\rho$ (which we can evaluate) will pass the PTR test for a range of $(\epsilon, \delta)$ values. 

    Let $\beta > 0$. If we propose value $\rho = \beta \bar{\rho}$ then $\tilde{A} - \rho C \succcurlyeq (1 - \beta) \Sigma$. In particular, using the above derivations and clipping we obtain that for all $i$, 
    \begin{equation}
        \Xtilde_i^\top (\tilde{A} - \rho C)^{-1} \Xtilde_i \leq (1 - \beta)^{-1} \Xtilde_i^\top \Sigma^{-1} \Xtilde_i \leq (1 - \beta)^{-1} R_c^2.  
    \end{equation}
    Since each term in the summation can be lower bounded, we obtain that 
    \begin{equation}
        \min \{|I_R|, \sum_{i \in I_R} \frac{1}{n}\Xtilde_i^\top (\tilde{A} - \rho C)^{-1} \Xtilde_i \geq 1\} \geq \min \{|I_R|, |I_R| \frac{R_c^2}{n(1 - \beta)} \geq 1\} = \left\lceil\frac{n(1 - \beta)}{R_c^2}\right\rceil
    \end{equation}
    In particular, if $n \geq - \frac{\log(\delta) R_c^2}{(1 - \beta)\varepsilon}$, we have that
    \begin{equation}
        \hat{\Delta} = \Delta_+ + Lap(\varepsilon^{-1}) \geq \Delta_+ \geq n(1 - \beta) / R_c^2 \geq - \log(\delta) / \varepsilon,
    \end{equation}
    and so the PTR always returns a value. Take $\beta = \frac{1}{2}$ to instantiate the theorem. 

    Note that Proposition~\ref{prop:ellipse_gaussian} uses the condition number of the covariance $\kappa_\Sigma$ instead of $\kappa$, the condition number of $A$. This is because $\kappa$ depends on the specific dataset that we consider, and so we should also run a PTR procedure to release it. Note that in the end, we pay a factor $\kappa_\Sigma$ since:
    \begin{equation}
        \sqnorm{X_i} X_i^\top A^{-2} X_i \leq \rho^{-2} \sqnorm{X_i} X_i^\top \Sigma^{-2} X_i \leq \kappa_\Sigma \rho^{-2} ( X_i^\top \Sigma^{-1} X_i)^2.
    \end{equation}
    
\end{proof}

\section{Details on the experimental setup}
\label{app:experiments}

In this section, we provide the various details needed to reproduce our results. Note that we also provide code in supplementary material. Our experimental setup assumes that the data is split across 2 nodes. At each step $t$, node $i$ privately releases its gradient $g_t^{(i)}$, computed at point $\theta_t$. For all methods, we run the following gradient descent algorithm: 
\begin{equation}
    \theta_{t+1} = \theta_t - \tau \times \frac{1}{2}(g_t^{(1)} + g_t^{(2)}),
\end{equation}
where $\tau$ is the step-size. In order to comply with the guidelines of Theorem~\ref{thm:utility}, we set it as half the maximum of the largest eigenvalue of the local covariance matrices, so $\tau = 0.5 / \max (\lambda_{\max}(A^{(1)}), \lambda_{\max}(A^{(2)}))$ which is an approximation for the true largest possible step-size. This is the step-size we use regardless of the method used for noising. Then, all methods only differ in the way they add noise to the gradients. The non-private method thus releases: 
\begin{equation}
    g_t^{(i)} = \frac{1}{N} \sum_{j=1}^N \nabla f_{ij}(\theta) = \frac{1}{N} \sum_{j=1}^N A_{ij} \theta - b_{ij}.
\end{equation}
    
\subsection{Parameters for GD with clipping}
For gradient clipping, each node clips all its individual gradients using a clipping threshold $c^{(i)}$. In particular, the noisy gradients write: 
\begin{equation}
    g_t^{(i)} = \frac{1}{N} \sum_{j=1}^N \frac{\nabla f_{ij}(\theta_t) c_i}{\max(c_i, \norm{\nabla f_{ij}(\theta_t)})} + \mathcal{N}(0, \sigma_i^2),
\end{equation}
where the variance of the noise $\sigma_i^2$ is computed as 
\begin{equation}
    \sigma_i^2 = \frac{\alpha c_i^2}{\varepsilon N^2},
\end{equation}
where $\alpha$ and $\varepsilon$ are the Rényi-DP parameters. The results of DP-GD heavily depend on how we set the clipping threshold $c_i$. 

One way of setting it is simply to randomly try out some clipping thresholds (potentially with external knowledge), but this is quite inefficient, as each node node needss to release $M$ times more gradients to the other node if we would like to try $M$ thresholds. Instead, we assume in this work that we can estimate the stochastic gradients at the optimum for the function we consider, and choose the smallest threshold such that none of these gradients is clipped, so $c_i = \max_j \norm{\nabla f_{ij}(\theta_i^\star)}$. This leverages auxiliary information, and is a very strong baseline in the homogeneous setting, but leads to small clipping thresholds in the heterogeneous setting. Small clipping thresholds induce small noise but potentially large bias. We observe in our experiments that thresholds significantly lower than $c_i$ do not introduce such a large bias still. We conjecture that this is due to the fact that the bias introduced from clipping is small for specific distributions, such as symmetric ones. Also note that per-instance clipping is in general computationally expensive, and in particular required significantly more time in our case.

\subsection{Parameters for GD with relative sensitivity}
In this case, the noisy gradients we release are of the form: 
\begin{equation}
    g_t^{(i)} = \bar{g_t}^{(i)} + \mathcal{N}(0, \gamma_i \|\bar{g_t}^{(i)}\|^2 + \sigma_i^2), \text{ where } \bar{g_t}^{(i)} = \frac{1}{N} \sum_{j=1}^N \tilde{A}_{ij} \theta_t - \tilde{b}_{ij}.
\end{equation}
The parameters $\gamma_i$ and $\sigma_i^2$ depend on the relative sensitivity parameters $\simi$ and $\Rrel$, as specified in Theorem~\ref{thm:renyi-sgm}. We set $\eta$ directly using Equation~\eqref{eq:condition_Lrel}, which we can check directly on our dataset. We could alternatively recover this threshold by privately releasing a rough approximation of the covariance matrix, and then setting $R_c$ and $\rho$ by binary search (or using a small public dataset) to obtain a small $\eta$ at the cost of logarithmic factors only (multiplying only the PTR procedure, not the whole training).

To set $\Rrel$, we use that it is bounded by $\simi\norm{b_i} + 2\max{\norm{b_{ij}}}$. Thus, we bound it by computing the max norm of the $b_{ij}$, which can also be enforced via clipping. Although $\norm{b_i} \leq \max\norm{b_{ij}}$, we use a separate threshold for $\norm{b_i}$ which usually leads to tighter guarantees. Note that we can alternatively approximate the local optimum $\theta_i^\star$ and set $\Rrel = \max_{k, \ell} \norm{\nabla f_{ik}(\theta_i^\star) - \nabla f_{i\ell}(\theta_i^\star)}$. However, we choose not to in this case because we want something that is \emph{enforceable}, and this bound is harder to enforce via clipping the $b_{ij}$. The privacy loss term $\alpha \eta^2 d$ is negligible in our case since datasets have moderate dimension. As the examples are mainly intended to be illustrative and the constants have not been optimized for, we omit constant factors when estimating $\simi$.

As in the clipping case, this specific procedure of choosing the hyperparameters does not strictly guarantee differential privacy, as it uses local datasets. Yet, in practice, application-specific knowledge can help set these parameters. We choose these favorable parameters to show what different methods can do with appropriate hyperparameters, without explicitly quantifying the privacy loss incurred by having to find these parameters. Besides, \textit{we choose parameters that can be enforced}, so that strict DP is guaranteed if we initially have a rough idea of what parameters should be. 

We present experiments on the \textit{ijcnn1} dataset (concatenation of train and test from LibSVM repository\footnote{\url{https://www.csie.ntu.edu.tw/~cjlin/libsvmtools/datasets/binary.html}}) since the estimated $\simi$ were rather small, and so could illustrate a case in which $\RGM$ is useful. This is also the case for other LibSVM datasets that we tested, such as \textit{HIGGS} or \textit{SUZY}. Other datasets such as \textit{cod-rna} require high levels of regularization to obtain small $\simi$, potentially trivializing the initial problem.

\section{Extra experiments}
\label{app:extra_exp}
To show that our results apply more broadly, we include results on the Higgs (truncated at $n=10^6$, $\varepsilon=10^{-2}$), in the same setting as Figure~\ref{fig:exp} (left). We see that similar results to the ijcnn1 datasets are obtained. To compare further, we also try a high privacy regime for ijcnn1 ($\varepsilon=10^{-3}$), in which we see that the performance of RGM is close to that of clipping with the best (small) threshold. 

\begin{figure}
    \centering
    \includegraphics[width=0.3\linewidth]{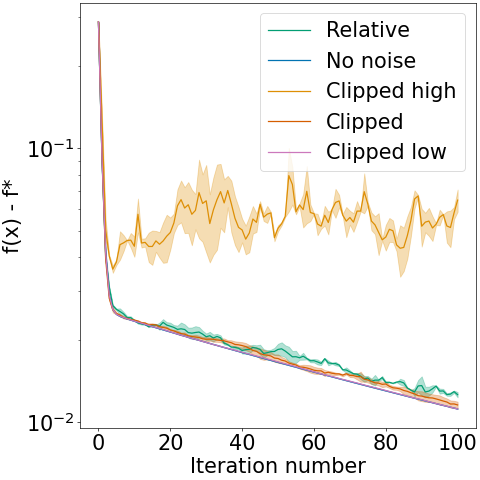}
    \includegraphics[width=0.3\linewidth]{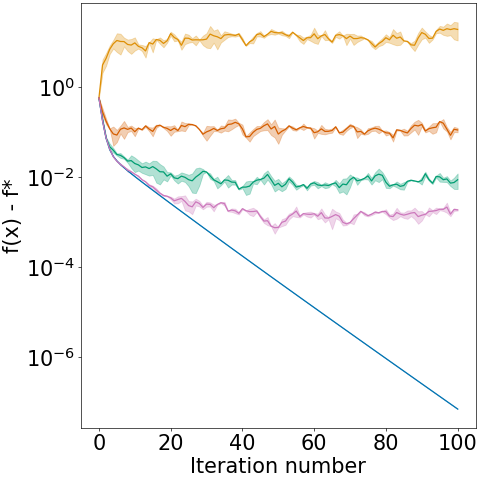}
    \caption{Left: HIGGS dataset, $\varepsilon=10^{-2}$, Right: ijcnn1 dataset, $\varepsilon=10^{-3}$}
    \label{fig:exp_app}
\end{figure} 

Note that although the experiments are for linear regression only, the relative sensitivity bounds can be obtained for more general functions, as detailed in Appendix B.2. Such relative smoothness assumptions can for instance be shown for generalized linear models. Similarly, our theory can handle non-convex functions, as long as the gradients satisfy the relative sensitivity assumption. The difficulty of applying our approach to neural networks is not related to non-convexity per se, but to the complexity of the functions considered in these cases, preventing meaningful relative sensitivity bounds.

\end{document}